\newcommand{\colortxt}[2]{\color{#1} {#2} \color{defaultcolor}}
\renewcommand*{\vec}[1]{\ensuremath{\boldsymbol{#1}}}
\newtheorem{problem}{Problem}[section]
\renewcommand*{\phi}{\varphi}
\renewcommand*{\rho}{\varrho}
\newcommand*{\veps}{\varepsilon}
\renewcommand*{\subset}{\subseteq}
\renewcommand*{\star}{\ensuremath{\ast}}
\newcommand*{\dd}{\ensuremath{\mathrm{d}}}
\newcommand*{\dx}[1]{\ensuremath{\,\dd{#1}}}
\newcommand*{\RR}{\ensuremath{\mathbb{R}}}
\newcommand*{\NN}{\ensuremath{\mathbb{N}}}
\DeclareMathOperator{\supp}{supp}
\DeclareMathOperator{\ddiv}{div}
\DeclareMathOperator*{\esssup}{ess\,sup}
\newcommand*{\inner}[2]{\ensuremath{\left\langle #1, #2 \right\rangle}}
\newcommand*{\hnorm}[1]{\ensuremath{\left\Vert #1 \right\Vert_{H^1}}}
\newcommand*{\norm}[1]{\ensuremath{\left\Vert #1 \right\Vert}}
\newcommand*{\bfu}{\ensuremath{\mathbf{u}}}
\newcommand*{\bfx}{\ensuremath{\mathbf{x}}}
\newcommand*{\bfz}{\ensuremath{\mathbf{z}}}
\newcommand*{\bfy}{\ensuremath{\mathbf{y}}}
\newcommand*{\bfv}{\ensuremath{\mathbf{v}}}
\newcommand*{\bfw}{\ensuremath{\mathbf{w}}}
\newcommand*{\bff}{\ensuremath{\mathbf{f}}}
\newcommand*{\bfd}{\ensuremath{\mathbf{d}}}
\newcommand*{\bfr}{\ensuremath{\mathbf{r}}}
\newcommand*{\bfe}{\ensuremath{\mathbf{e}}}
\newcommand*{\bfkappa}{{\ensuremath{\boldsymbol{\kappa}}}}
\newcommand*{\tmeans}{\ensuremath{\boldsymbol{\Upsilon}}}
\newcommand*{\ATau}[1][{}]{\ensuremath{A^{#1}}}
\newcommand*{\Tri}{\ensuremath{\mathcal{T}}}
\newcommand*{\obfkappa}[1]{\overline{\bfkappa}^{(#1)}}
\newcommand*{\FAuk}{F}
 \newcommand*{\convts}{\star^{2\text{ts}}}
 \newcommand*{\convs}{\star^{2\text{s}}}
 \newcommand*{\convv}{\star}
 \let\epsilon\veps
\renewcommand{\vec}[1]{\boldsymbol{#1}} 
\newcommand{\NNparam}{\vec{\theta}}
\newcommand{\MLNet}{\textrm{ML-Net}}
\newcommand{\VNet}{\textrm{UNetSeq}}
\newcommand{\UNet}{\textrm{UNet}}
\newcommand{\MRH}{ \text{MR}H^1}
\newcommand{\Err}{\mathcal{E}_{\MRH}}
\newcommand{\ErrRef}{\mathcal{E}^{\text{ref}}_{\text{MR}H^1}}
\newcommand{\Errl}{\mathcal{E}_{\text{MR}L^2}}
\newcommand{\ErrRefl}{\mathcal{E}^{\text{ref}}_{\text{MR}L^2}}
\newcommand{\levelOutput}[1][\ell]{\bfz^{(#1)}}
\newcommand{\Cin}{C_{\text{in}}}
\newcommand{\Cout}{C_{\text{out}}}
\newcommand{\Width}{W}
\newcommand{\Height}{H}
\newcommand{\kernel}{K}
\newcommand{\Wk}{\Width_\kernel}
\newcommand{\Hin}{\Height_{\text{in}}}
\newcommand{\Win}{\Width_{\text{in}}}
\newcommand{\Hout}{\Height_{\text{out}}}
\newcommand{\Wout}{\Width_{\text{out}}}
\newcommand{\epsin}{(0, \nicefrac{1}{2})}
\newcommand{\apmult}{\widetilde\times}
\DeclareMathOperator{\neigh2d}{Neigh2D}
\DeclareMathOperator{\mg}{MG}
\newenvironment{proofof}[1]{\par\noindent{\bf #1\ }}{\hfill\BlackBox\\[2mm]}
\begin{document}
\title{Multilevel CNNs for Parametric PDEs}

\author{\name Cosmas Heiß \email cosmas.heiss@gmail.com \\
       \addr Department of Mathematics\\
       Technical University of Berlin\\
       Berlin, Germany
       \AND
       \name Ingo G{\"u}hring \email guehring@math.tu-berlin.de \\
       \addr Machine Learning Group\\
       Technical University of Berlin\\
       Berlin, Germany
       \AND
       \name Martin Eigel \email martin.eigel@wias-berlin.de \\
       \addr Weierstraß Institute\\
       Berlin, Germany
       }

\maketitle


\begin{abstract}

We combine concepts from multilevel solvers for partial differential equations (PDEs) with neural network based deep learning and propose a new methodology for the efficient numerical solution of high-dimensional parametric PDEs. An in-depth theoretical analysis shows that the proposed architecture is able to approximate multigrid V-cycles to arbitrary precision with the number of weights only depending logarithmically on the resolution of the finest mesh. As a consequence, approximation bounds for the solution of parametric PDEs by neural networks that are independent on the (stochastic) parameter dimension can be derived.

The performance of the proposed method is illustrated on high-dimensional parametric linear elliptic PDEs that are common benchmark problems in uncertainty quantification. We find substantial improvements over state-of-the-art deep learning-based solvers. As particularly challenging examples, random conductivity with high-dimensional non-affine Gaussian fields in 100 parameter dimensions and a random cookie problem are examined. Due to the multilevel structure of our method, the amount of training samples can be reduced on finer levels, hence significantly lowering the generation time for training data and the training time of our method.
\end{abstract}

\begin{keywords}
  deep learning, partial differential equations, parametric PDE, multilevel, expressivity, CNN, uncertainty quantification
\end{keywords}

\section{Introduction}
\label{sec:introduction}

The application of deep learning (DL) to many problems of the natural sciences has become one of the most promising emerging research areas, sometimes coined \textit{scientific machine learning} \citep{baldi2018biomedical, Sadowski2018, keith2021combining, noe2020machine, lu2021deepxde}.
In this work, we focus on the challenging problem of parametric partial differential equations (PDEs) that are used to describe complex physical systems e.g. in engineering applications and the natural sciences. The parameters determine the data or domain and are used to introduce uncertainties into the model, e.g. based on assumed or measured statistical properties. The resulting high-dimensional problems are analyzed and solved numerically in the research area of uncertainty quantification (UQ) \citep{lord2014introduction}. Such parametric models are of importance in automated design, weather forecasting, risk simulations, and the material sciences, to name but a few.

To make the problem setting concrete, for a possibly countably infinite parameter space $\Gamma\subset\mathbb R^{\mathbb N}$ and a spatial domain $D\subset \RR^d$, $d\in\{1,2,3\}$, we are concerned with learning the solution map $u:\Gamma\times \overline{D}\to\RR$ of the \emph{parametric stationary diffusion PDE} that satisfies the following equation for all parameters $\bfy\in\Gamma$
\begin{equation}
\label{eq:darcy}
    \begin{cases} 
        -\ddiv_x \kappa(\bfy, x)  \nabla_x u(\bfy, x) &= f(x) \quad\text{on } D, \\
        \hfill u(\bfy, x) &= 0 \qquad\text{on } \partial D,
    \end{cases}    
\end{equation}
where $\kappa:\Gamma \times D \to \RR$ is the parameterized diffusion coefficient describing the diffusivity, i.e. the media properties.
Note that differential operators act with respect to the physical variable $x$.

As a straight-forward but computationally ineffective solution, Equation~\eqref{eq:darcy} could be solved numerically for each $\bfy\in\Gamma$ individually, using e.g. the finite element (FE) method and a Monte Carlo approach to estimate statistical quantities of interest. More refined methods such as stochastic Galerkin \citep{EigelGittelson2014asgfem,EigelMarschall2020lognormal}, stochastic collocation \citep{stochastic_collocation, nobile2008sparse,ernst2018convergence} or least squares methods \citep{chkifa2015discrete,EigelTrunschke2019vmc,cohen2023near} exploit smoothness and low-dimensionality of the solution manifold $\{u(\bfy): \bfy \in \Gamma\}$ by projection or interpolation of a finite set of discrete solutions $\{u_h(\bfy_n)\}_{n=1}^N$.
After the numerical solution of Equation~\eqref{eq:darcy} at sample points $\bfy_n$ and the evaluation of the projection or interpolation scheme, solutions for arbitrary $\bfy\in \Gamma$ can quickly be obtained approximately simply by evaluating the constructed functional ``surrogate''. However, while compression techniques such as low-rank approximations or reduced basis methods foster the exploitation of low-dimensional structures, these approaches still become practically infeasible for large parameter dimensions rather quickly. Multilevel approaches represent another concept that can be applied beneficially for this problem class as was e.g. shown with multilevel stochastic collocation (MLSC) \citep{teckentrup2015} and multilevel quadrature \citep{harbrecht2016multilevel,ballani2016multilevel}. The central idea is to equilibrate the error components of the physical and the statistical approximations. This can be achieved by performing a frequency decomposition of the solution in the physical domain based on a hierarchy of nested FE spaces determined by appropriate spatial discretizations. An initial approximation on the coarsest level is successively refined by additive corrections on finer levels. Corrections generally contain strongly decreasing information for finer levels, which can be exploited in terms of a level dependent fineness of the stochastic discretization.

For the numerical solution of (usually non-parametric) PDEs, a plethora of NN architectures has been devised~\citep[see e.g.,][]{beck2020overview, berner2020numerically,raissi2019physics,sirignano2018dgm,weinan2017deep,ramabathiran2021spinn,kharazmi2019variational,yu2018deep,han2017deep,li2020fourier}. Similar to classical methods, DL-based solvers for parametric PDEs typically learn the mapping $\bfy \mapsto u_h(\bfy)$ in a supervised way from computed (or observed) samples  $\{u^h(\bfy_n)\}_{n=1}^N$ \citep[see e.g.,][]{geist2020numerical, HESTHAVEN201855, anandkumar2020neural}. This results in a computationally expensive data generation process, consisting of approximately computing the solutions at $\bfy_n$ and subsequently training the NN on this data. After training, only a cheap forward pass is necessary for the approximate solution of~\eqref{eq:darcy} at arbitrary $\bfy$. We note that in contrast to many classical machine learning problems, a dataset can be generated on the fly and in principle with arbitrary precision.

Complementing numerical results, significant progress has been made to understand the approximation power, also called \emph{expressivity}, of NNs for representing solutions of PDEs~\citep{kutyniok2019theoretical, schwab2019deep, berner2020analysis, mishra2019estimates, grohs2018proof, guhring2019error, guhring2021approx}.
These works show that NNs are at least as expressive as ``classical'' best-in-class methods like the ones mentioned before. However, numerical results do often not reflect this theoretical advantage~\citep{adcock2021gap}. Instead, the convergence of the error typically stagnates early during training and the accuracy of classical approaches is not reached \citep{geist2020numerical,kharazmi2019variational,lu2021deepxde,grossmann2023can}.

We propose a DL-based NN approach that overcomes the performance shortcomings of other NN architectures for parametric PDEs by borrowing concepts from MLSC and multigrid algorithms. Coarse approximation and successive corrections on finer levels are learned in a supervised way by sub-networks and combined for the final prediction. Analogously to multilevel Monte Carlo (MLMC) and MLSC methods, we show that the fineness of sampling the stochastic parameter space (in terms of the amount of training data per level) can be (reciprocally) equilibrated with the fineness of the spatial discretization. The developed architecture consists of a composition of UNets \citep{ronneberger_u_net}, which inherently are closely related to multiresolution approaches.

In the following, we summarize our main contributions.
\begin{itemize} 
    \item \emph{Theory:} We conduct an extensive analysis of the expressivity of UNet-like convolutional NNs (CNNs) for the solution of parametric PDEs. Our results show that under the uniform ellipticity assumption, the number of required parameters is independent on the parameter dimension with increasing accuracy. In detail, we show that common multilevel CNN architectures are able to approximate a V-cycle multigrid solver iteration with Richardson smoother \citep{richardson_iteration} with less than $\log(\nicefrac{1}{h})$ weights up to arbitrary accuracy, where $h$ is the mesh size of the underlying FE space (Theorem~\ref{thm:cnn_multigrid}).
    Using this result, we approximate the solution map of \eqref{eq:darcy} up to expected $H^1$-norm error $h$ by an NN with $\log(\nicefrac{1}{h})^2$ weights.
    \item \emph{Numerical Results:} We conduct an in-depth numerical study of our method and, more generally, UNet-based approaches on common benchmark problems for parametric PDEs in UQ.
   Strikingly, the tested methods show significant improvements over the state-of-the-art NN architectures for solving parametric PDEs.
   To reduce computing complexity during training data generation and training the NN, we shift the bulk of the training data to coarser levels, which allows our method to be scaled to finer grids, ultimatively resulting in higher accuracy.
\end{itemize}

\subsection{Related Work}
\cite{schwab2019deep,grohs2022deep} derived expressivity results for approximating the solution of parametric PDEs
based on a sparse generalized polynomial chaos representation of the solution known from UQ~\citep{schwab2011sparse,cohen2015approximation_long} and a central result for approximating polynomials by NNs~\citep{yarotsky2017error}.
\cite{kutyniok2019theoretical} translated a reduced basis method into an NN architecture. None of these results show bounds that are independent on the (stochastic) parameter dimension. In contrast, translated to our setting (and simplified), \cite{kutyniok2019theoretical} show an asymptotic upper bound of $h^{-2}\log(h^{-1})^p + p\log(h^{-1})^{3p}$ weights.
Numerical experiments are either not provided or do not support the positive theoretical results \citep{geist2020numerical}. For general overview of expressivity results for NNs, we refer the interested reader to~\citep{guehring2020expressivity}.

Related to our method and instructive for its derivation is the MLSC method presented in \citep{teckentrup2015} and also~\citep{harbrecht2016multilevel,ballani2016multilevel}.  Another related work is~\citep{lye_mishra_molinaro_2021}, where a training process with multilevel structure is presented. Under certain assumptions on the achieved accuracy of the optimization, an error analysis is carried out that could in part also be transferred to our architecture.

The connection between differential operators and convolutional layers is well known \citep[see][]{metaMGNet} and CNNs have already been applied to approximate parametric PDEs for problems like predicting the airflow around a given geometry in the works \citep{CNNFlowAerofoil, parametricCNNFlowCars}. In~\citep{bayesianCNNstochasticPDE} Bayesian CNNs were used to assess uncertainty in processes governed by stochastic PDEs. \citep{lagrangianFluidConvStuff} extend the concept to continuous convolutions in mesh-free Lagrangian approaches for solving problems in fluid dynamics. The concept of a multilevel decomposition has also been applied to CNNs to approximate solutions of PDEs or in~\citep{multilevelCNNnotPDE} for general generative modeling, or based on hierarchical matrix representations in~\citep{multilevelNetworkHirarchicalBases, multilevelNetworkHirarchicalMatrices}.

Such hierarchical matrix representations have even been extended to graph CNNs as multipole kernels to allow for a mesh-free approximation of the solutions to parametric PDEs~\citep{multilevelGraphPDEThingy}. This approach is similar to our method. However, the mathematical connection to classical multigrid algorithms is not explored. The kernels are based on fast multipole modeling without a direct link to the FE method.

\citet{MGNet} present a multilevel approach called MgNet, where various parts of a classical multigrid solver are replaced by NNs resulting in a UNet-like NN and use it for image classification. Meta-MgNet \citep{metaMGNet} is a meta-learning approach using MgNet to predict solutions for parametric PDEs and is most closely related to our method. The predicted solution is successively refined and stochastic parameters are incorporated by computing encodings of the parametric discretized operator on each level using additional smaller NNs. The output of these meta-networks then influences the smoothing operation in the MgNet architecture. However, in contrast to our method, Meta-MgNet still depends on the assembled operator matrix. Furthermore, while using an iterative approach yields more accurate approximations, the speedup promised by a single application of a CNN is mitigated and the authors report runtimes on the same order of magnitude as classical solvers. Lastly, the authors test their method on problems with uniform diffusivity using two or three degrees of freedom instead of a continuous diffusivity field. Our approach differs from the Meta-MgNet in the following ways: (i) We show that a general UNet is already able to approximate parametric differential operators without the need for a meta-network approach; (ii) we improve on the training procedure by incorporating a multilevel optimization scheme; (iii) we provide a theoretical complexity analysis; (iv) we provide experiments showing high accuracies with a single application of our method for a variety of random parameter fields.

\subsection{Outline}
In the first section we introduce the parametric Darcy model problem and its FE discretization. In Section~\ref{sec:multilevel_spaces}, we give a brief introduction to multigrid methods. In Section~\ref{sec:method}, we describe our DL-based solution approach. Section~\ref{sec:analysis} is concerned with the theoretical analysis of the expressivity of the presented architecture. Section~\ref{sec:experiments} contains our numerical study on several parametric PDE problems.
We discuss and summarize our findings in Section~\ref{sec:conclusion}. Implementation details and our proofs can be found in the appendix.

\section{Problem setting}
\label{sec:setting}

A standard model problem in UQ is the stationary linear diffusion equation \eqref{eq:darcy} where the diffusion coefficient $\kappa:\Gamma \to L^{\infty}(D)$ is a random field with finite variance determined by a possibly high-dimensional random parameter $\bfy \in \Gamma\subseteq \mathbb R^{\mathbb N}$, hence also parametrizing the solution.
The parameters are images of random vectors distributed according to some product measure $\pi=\otimes_{k\geq 1}\pi_1$, where $\pi_1$ usually either is the uniform (denoted by $U$) or the standard normal measure (denoted by $N$) on $\Gamma_1\subseteq\mathbb R$.
For each parameter realization $\bfy\in\Gamma$, the \emph{variational formulation} of the problem is given by: For a bounded Lipschitz domain $D\subset \RR^d$ and source term $f\in V^\ast=H^{-1}(D)$, find $u\in V:=H_0^1(D)$ such that for all test functions $w \in V$,
\begin{align}
\label{eq:variational darcy}
    \int_D \kappa(\bfy,x) \nabla u(x)\cdot\nabla w(x) \dx x = \int_D f(x)w(x)\dx x.
\end{align}
We assume that the above equation always exhibits a unique solution.
In fact, this holds true with high probability even for the case of unbounded (lognormal) coefficients, where uniform ellipticity of the bilinear form is not given.
This yields the solution operator $v$, mapping a parameter realization $\bfy \in \Gamma$ to its corresponding solution $v(\bfy)$:
\begin{equation*}
    v \colon \Gamma \to V,\qquad \bfy \mapsto v(\bfy).
\end{equation*}
In this work, we strive to develop an efficient and accurate NN approximation $\tilde{v}\colon \Gamma \to V$, which minimizes the expected $H^1$-distance 
\begin{equation*}
 \int_{\Gamma} \norm{\nabla(v(\bfy) - \tilde v(\bfy))}_{L^2(D)}^2 \dx\pi(\bfy).
\end{equation*}
With the assumed boundedness (with high probability) of the coefficient $\kappa$, this is equivalent to optimizing with respect to the canonical parameter-dependent norm.

We recall the truncated \emph{Karhunen-Lo\`eve expansion} (KLE) \citep{knio2006uncertainty,lord2014introduction} with stochastic parameter dimension $p$, which is a common parametric representation of random fields.
It is given by
\begin{equation}
\label{eq:diffusion_coefficient_expansion}
    \kappa(\bfy,x) = a_0(x) + \sum_{k=1}^p \bfy_k a_k(x).
\end{equation}
Here, the $p\in \NN$ basis functions $a_k \in L^2(D), 1\leq k \leq p$ are determined by assumed statistical properties of the field $\kappa$.
When Gaussian random fields are considered, they are completely characterized by their first two moments and the $a_k$ are the eigenfunctions of the covariance integral operator while the parameter vector $\bfy$ is the image of a standard normal random vector.
To become computationally feasible, a sufficient decay of the norms of these functions is required for the truncation to be reasonable.
Note that in the numerical experiments, we use a well-known ``artificial'' KLE.

\subsection{Discretization}
\label{sec:fem}
To solve the infinite dimensional variational formulation~\eqref{eq:variational darcy} numerically, a finite dimensional subspace has to be defined in which an approximation of the solution is computed.
As is common with PDEs, we use the FE method, which is based on a disjoint decomposition of the computational domain $D$ into simplices (also called elements or cells).
Setting the domain $D=[0,1]^2$, we assume a subdivision into congruent triangles such that $D$ is exactly represented by a uniform square mesh with identical connectivity at every node (see also Remark~\ref{rmk:uniform_square_mesh}).
The discrete space $V_h$ is then spanned by conforming P1 FE hat functions $\varphi_j:\RR^2\to\RR$, i.e., $V_h=\mathrm{span}\{\varphi_j\}_{j=1}^{\dim V_h}\subset V$ \citep[see e.g.,][]{braess2007finite,ciarlet_book}.
We write $u_h, w_h \in V_h$ for the discretized versions of $u, w \in V$ of Equation~\eqref{eq:variational darcy} with FE coefficient vectors $\bfu ,\bfw \in \RR^{\dim V_h}$ and denote by $\kappa_h$ the nodal interpolation of $\kappa$ such that ${\kappa_h(\bfy, \cdot) \in V_h}$ with coefficient vector $\bfkappa_\bfy\in \RR^{\dim V_h}$ for every $\bfy \in \Gamma$.
Hence,
\[
u_h = \sum_{i=1}^{\dim V_h} \bfu_i \phi_i, \quad w_h = \sum_{i=1}^{\dim V_h} \bfw_i \phi_i, \quad \kappa_h(\bfy, \cdot) = \sum_{i=1}^{\dim V_h} (\bfkappa_\bfy)_i \phi_i.
\]
The discretized version of the parametric Darcy problem~\eqref{eq:variational darcy} reads:
\begin{problem}[Parametric Darcy problem, discretized]
\label{def:darcy_discretized}
     For $\bfy \in \Gamma$, find $u_h \in V_h$ such that for all $w_h \in V_h$
    \begin{align*}
        a_{\bfy,h}(u_h, v_h) = f(w_h),
    \end{align*}
    where $a_{\bfy,h}(u_h, v_h)\coloneqq\int_D \kappa_h(\bfy, x) \nabla u_h(x)\cdot\nabla w_h(x) \dx x$.
    Using the FE coefficient vectors leads to the system of linear equations
\begin{equation}
\label{def:matrx_equation}
    A_\bfy \bfu = \bff,
\end{equation}
where $\bff\coloneqq (f(\phi_i))_{i=1}^{\dim V_h} = (\int_D f(x)\phi_i(x)\dx x)_{i=1}^{\dim V_h}$ and $A_\bfy\coloneqq (a_{\bfy,h}(\phi_j, \phi_i))_{i,j=1}^{\dim V_h}$.
\end{problem}

We assume that there always exists a unique solution to this problem and define the discretized parameter to solution operator by
\begin{equation}
\label{eq:discretized_solution_operator}
    v_h \colon \Gamma \to V_h, \qquad \bfy \mapsto v_h(\bfy),
\end{equation}
where $v_h(\bfy)$ solves Problem~\ref{def:darcy_discretized} for any $\bfy\in\Gamma$. Furthermore, $\bfv_h \colon \Gamma \to \RR^{\dim V_h}$ maps $\bfy$ to the FE coefficient vector of $v_h(\bfy)$.

\begin{remark}
We neglect the influence of the approximation of the diffusivity field $\kappa$ as $\kappa_h$ in our analysis since it would only introduce unnecessary technicalities. It is well-known from FE analysis how this has to be treated theoretically \citep[see e.g.,][Chapter 4]{ciarlet_book}. Notably, we always assume that the coefficient is discretized on the finest level of the multigrid algorithm that we analyze and evaluate in the experiments.
\end{remark}

\section{A primer on multigrid methods}
\label{sec:multilevel_spaces}

Our proposed method relies fundamentally on the notion of multigrid solvers for algebraic equations.
These iterative solvers exhibit an optimal complexity in the sense that they converge linearly in terms of degrees of freedom.
Their mathematical properties are well understood and they are commonly used for solving PDEs discretized, in particular, with FEs and a hierarchy of meshes \citep[see][]{braess2007finite,yserentant1993old,hackbusch2013multi}.
This section provides a brief primer on the multigrid setting.

The starting point is a hierarchy of nested spaces indexed by the level $\ell \in \{1,\ldots,L\}$ with finest level $L$, which determines the accuracy of the approximation. In our setting, this finest level is chosen in advance and is assumed to yield a sufficient FE accuracy.
Similar to the previous section, we define $V_\ell$ as the FE space of conforming P1 elements on the associated dyadic quasi-uniform triangulations $\mathcal{T}_\ell$ with mesh size $h_\ell\sim 2^{-\ell}$ and degrees of freedom $\dim V_\ell\sim 2^{\ell d}$.
This yields a sequence of nested FE spaces
$$
{V_1 \subset \ldots \subset V_L \subset H_0^1(D)}.
$$
These spaces are spanned by the the FE basis functions on the respective level, i.e., $V_\ell=\mathrm{span}\{\phi^{(\ell)}_j\}_{j=1}^{\dim V_\ell}$.
A frequency decomposition of the target function $u\in H_0^1(D)$ is obtained by approximating low frequency parts on the coarsest mesh $\mathcal T_1$ and subsequently calculating corrections for higher frequencies on finer meshes.

For each level $\ell=1,\ldots,L$ we define the discretized solution operator $v_\ell \colon \Gamma \to V_\ell$ as in~\eqref{eq:discretized_solution_operator}.
The correction $\hat v_\ell$ with respect to the next coarser level $\ell-1$ for $\ell\in \{2,\ldots,L\}$ is given by
\begin{equation}
\label{eq:correction_solution_operator}
    \hat{v}_\ell \colon \Gamma \to V_\ell,\qquad \bfy \mapsto v_\ell(\bfy) - v_{\ell-1}(\bfy).
\end{equation}
Additionally, we define $\bfv_\ell$ and $\hat{\bfv}_\ell$ as the functions mapping the parameter $\bfy$ to the corresponding coefficient vectors of $v_\ell(\bfy)$ and $\hat{v}_\ell(\bfy)$ in $V_\ell$.
If the solution of the considered PDE is sufficiently smooth, the corrections decay exponentially in the level, namely $\hnorm{\hat{v}_\ell(\bfy)} \lesssim 2^{-\ell} \norm{f}_\ast$ \citep[see][]{braess2007finite,yserentant1993old,hackbusch2013multi}.

Central to the multigrid method is a transfer of discrete functions between adjacent levels of the hierarchy of spaces.
This is achieved in terms of prolongation and restriction operators, the discrete versions of which are defined in the following.

\begin{definition}[Prolongation \& weighted restriction matrices]
\label{def:prolongation_restriction}
Let $L \in \NN$ and for some $\ell \in \{1,\ldots, L-1\}$ the spaces $V_\ell$ and $V_{\ell+1}$ be defined as above.
The \emph{prolongation matrix} $P_\ell \in \RR^{\dim V_{\ell+1} \times \dim V_\ell}$ is the matrix representation of the canonical embedding of $V_\ell$ into $V_{\ell+1}$ under the respective FE basis functions. $P_\ell^T \in \RR^{\dim V_\ell \times \dim V_{\ell+1}}$ defines the weighted restriction.
\end{definition}

The theoretical analysis of our multilevel NN architecture hinges on the multigrid V-cycle which is depicted in Algorithm~\ref{algo:v-cycle}. It exploits the observation that simple iterative solvers reduce high-frequency components of the residual. In addition to these so-called \emph{smoothing iterations}, low-frequency correction are recursively computed on coarser grids. In fact, if the approximation error by the recursive calls for the coarse grid correction is sufficiently small, a grid independent contraction rate is achieved \citep[see][]{hackbusch_v_cycle_proof,braess2007finite}.

For our analysis, we use a damped Richardson smoother~\citep{richardson_iteration}, which is rarely used in practice due to subpar convergence rates, but theoretically accessible because of its simplicity. For a suitable damping parameter $\omega>0$, the Richardson iteration for solving Equation~\eqref{def:matrx_equation} is given by
\begin{equation}
\begin{split}
    \bfu^{0} &:= \mathbf{0},\\
    \bfu^{n+1} &:= \bfu^{n} + \omega (\bff - A_{\bfy} \bfu^{n}).
\end{split}
\end{equation}

We denote by $\mg_{k_0, k}^m \colon \RR^{3 \times \dim V_h} \to \RR^{\dim V_h}$ the function mapping an initial guess, the diffusivity field and the right-hand side to the result of $m$ multigrid V-cycles (Algorithm~\ref{algo:v-cycle}) with $k_0$ smoothing iterations on the coarsest level and $k$ smoothing iterations on the finer levels.

Based on this iteration, we use a fundamental convergence result for the multigrid algorithm shown in~\citep[Thm.~4.2]{hackbusch_v_cycle_proof} together with the assumption of uniform ellipticity to get the uniform contraction property of the V-cycle multigrid algorithm. For $\bfx\in \RR^{\dim V_h}$ and the discretized operator $A\in \RR^{\dim V_h \times \dim V_h}$, we use the standard notation for the energy norm $\norm{\bfx}_{A} = \lVert{A^{1/2}\bfx}\rVert_{\ell^2}$.
\begin{theorem}
\label{thm:hackbusch_thm}
    Let $k\in \NN$. There exists $\omega>0$ and mesh-independent $k_0\in\NN$, $C<0$, such that the V-cycle iteration with $k$ pre- and post-smoothing iterations, $k_0$ iterations on the coarsest grid, and Richardson damping parameter $\omega$ applied to Problem~\ref{def:darcy_discretized} yields a grid independent contraction of the residual $\bfe^i_\bfy:=\mg_{k_0, k}^{i}(\mathbf{0}, \bfkappa_\bfy, \bff)-\bfv_h(\bfy)$ at the $i$-th iteration, namely
\begin{align*}
        \lVert{\bfe^{i+1}_\bfy}\rVert_{A_\bfy} \leq \mu(k) \lVert{\bfe^i_\bfy}\rVert_{A_\bfy},\quad \mu(k) \leq  \frac{C}{C + 2k} < 1,
    \end{align*}
     for all $\bfy\in\Gamma$.
\end{theorem}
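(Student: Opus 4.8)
The plan is to reduce the claim to the abstract V-cycle convergence theorem \citep[Thm.~4.2]{hackbusch_v_cycle_proof} by verifying its two structural hypotheses, the \emph{smoothing property} and the \emph{approximation property}, and then to upgrade the resulting pointwise-in-$\bfy$ contraction to a parameter-uniform one. Since Hackbusch's result concerns a single fixed operator, whereas here we deal with the whole family $\{A_\bfy\}_{\bfy\in\Gamma}$, the actual content of the theorem is the assertion that $\omega$, $k_0$ and $\mu(k)$ may be chosen independently of $\bfy$; this is exactly what uniform ellipticity provides.

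First I would translate uniform ellipticity to the matrix level. Boundedness of the coefficient gives $0<\acmin\le\kappa(\bfy,x)\le\acmax<\infty$ for all $\bfy\in\Gamma$, $x\in D$, with $\acmin,\acmax$ independent of $\bfy$, and the same bounds hold for the nodal interpolant $\kappa_h(\bfy,\cdot)$ as a convex combination of nodal values. Writing $A_0$ for the stiffness matrix of the unit coefficient $\kappa\equiv1$, the monotonicity $\acmin\,a_{0,h}(w_h,w_h)\le a_{\bfy,h}(w_h,w_h)\le\acmax\,a_{0,h}(w_h,w_h)$ of the bilinear forms lifts to the Loewner sandwich $\acmin A_0\preceq A_\bfy\preceq\acmax A_0$. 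Hence the spectrum of every $A_\bfy$ lies in $[\acmin\lambda_{\min}(A_0),\,\acmax\lambda_{\max}(A_0)]$, and a single damping parameter $\omega\sim 1/\bigl(\acmax\lambda_{\max}(A_0)\bigr)$ makes the Richardson iteration matrix $S_\bfy=I-\omega A_\bfy$ symmetric with spectrum in $(0,1)$ simultaneously for all $\bfy$.

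Next I would check the two multigrid hypotheses with $\bfy$-uniform constants. The smoothing property is purely spectral here: since $A_\bfy$ and $S_\bfy$ commute, the elementary estimate $\max_{0\le t\le1}t(1-t)^k\le \frac{1}{e(k+1)}$ combined with $\omega\sim 1/\norm{A_\bfy}_{\ell^2}$ yields $\norm{A_\bfy S_\bfy^{k}}_{\ell^2}\le C_S\,k^{-1}\norm{A_\bfy}_{\ell^2}$, with $C_S$ depending only on $\acmax/\acmin$. The approximation property, stating that the coarse-grid correction captures the low-frequency part of the error, follows from the usual FE duality and elliptic-regularity estimates on the fixed convex domain $D=[0,1]^2$; its constant is governed by $\acmin,\acmax$ and $D$ alone, so it is again uniform in $\bfy$ and in the level. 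Finally, because the coarsest grid (level $1$) is fixed, the number $k_0$ of Richardson steps needed to solve the coarse problem to the tolerance demanded by Hackbusch's theorem is controlled by $\mathrm{cond}(A_\bfy^{(1)})\le (\acmax/\acmin)\,\mathrm{cond}(A_0^{(1)})$, hence mesh- and $\bfy$-independent.

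With both properties established with constants depending only on $\acmin,\acmax$ and $D$, \citep[Thm.~4.2]{hackbusch_v_cycle_proof} applies to each $A_\bfy$ and delivers the energy-norm contraction $\norm{\bfe^{i+1}_\bfy}_{A_\bfy}\le\mu(k)\norm{\bfe^i_\bfy}_{A_\bfy}$ with $\mu(k)\le C/(C+2k)<1$, the positive constant $C$ being assembled from the smoothing and approximation constants and the factor $2k$ reflecting the $k$ pre- and $k$ post-smoothing steps. Since all of these quantities are $\bfy$-uniform, so are $\omega$, $k_0$ and $\mu(k)$, which proves the estimate for all $\bfy\in\Gamma$ at once. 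I expect the main obstacle to be precisely this uniformity bookkeeping: one must trace every constant inherited from the abstract theorem back to $\acmin,\acmax$ (and the domain) and verify that none of them secretly depends on $\bfy$ or on $h$. The Loewner sandwich $\acmin A_0\preceq A_\bfy\preceq\acmax A_0$ is the single estimate that makes all of this go through.
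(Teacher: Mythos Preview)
Your proposal is correct and follows the same route as the paper: the theorem is not proved in detail there but is stated as a direct consequence of \citep[Thm.~4.2]{hackbusch_v_cycle_proof} combined with uniform ellipticity, with Remark~\ref{rmk:inexact_coarse_is_enough} handling the inexact coarse solve exactly as you do. Your write-up supplies the uniformity bookkeeping (the Loewner sandwich, the $\bfy$-independent smoothing and approximation constants) that the paper leaves implicit.
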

In the next remark, we elaborate on the number of smoothing steps on the coarsest grid $k_0$.
\begin{remark}
\label{rmk:inexact_coarse_is_enough}
     In the V-cycle algorithm it is often assumed that the equation system solved on the coarsest level is solved exactly (by a direct solver). However, the proof of \citep[Thm.~4.2]{hackbusch_v_cycle_proof} shows that this is not necessary. In fact, a contraction of the residual smaller than $C / (C + 2k)$ on the coarsest level suffices. This shows that $k_0$ is independent on the fineness of the finest grid. For a rigorous derivation, we refer to~\cite[Theorem 4.4]{inexact_two_grid}. In our setting, it is possible to use the damped Richardson iteration for the correction on the coarsest grid as well, effectively carrying out additional smoothing steps.
\end{remark}

\begin{algorithm}
\caption{\texttt{V-cycle} function}
\label{algo:v-cycle}
\begin{algorithmic}[1]
\STATE \textbf{Input:} $\bfu$, $\bff$, $A$, $\ell$
    \FOR {$k$ pre-smoothing steps}
        \STATE $\bfu \gets \bfu + \omega (\bff - A \bfu)$ \COMMENT{perform smoothing steps}
    \ENDFOR
    \IF {$\ell = 1$}
    \STATE solve $A\bfu = \bff$ for $\bfu$ \COMMENT{coarsest grid step}
    \ELSE
    \STATE $P \gets P_{\ell - 1}$
    \STATE $\overline{\bfr} \gets P^\intercal (\bff - A \bfu)$ \COMMENT{compute restricted residual}
    \STATE $\overline{A} \gets P^\intercal A P$ \COMMENT{compute restricted operator}
    \STATE $\overline{\bfe} \gets \mathbf{0}$
    \STATE $\overline{\bfe} \gets \texttt{V-cycle}(\overline{\bfe}, \overline{\bfr}, \overline{A}, \ell-1)$ \COMMENT{solve for coarse correction $\overline{\bfe}$}
    \STATE $\bfu \gets \bfu + P \overline{\bfe}$ \COMMENT{add coarse correction}
    \ENDIF
    \FOR {$m$ post-smoothing steps}
        \STATE $\bfu \gets \bfu + \omega (\bff - A \bfu)$ \COMMENT{perform smoothing steps}
    \ENDFOR
\RETURN \bfu
\end{algorithmic}
\end{algorithm}

\section{Methodology of multilevel neural networks}
\label{sec:method}

This section gives an overview of our multilevel NN architecture for the efficient numerical solution of parametric PDEs. Some of the design choices are tailored specifically to the stationary diffusion equation~\eqref{eq:darcy} but can easily be adapted to other settings, in particular to different linear and presumably also nonlinear PDEs. Furthermore, in our analysis and experiments we fix the number of spatial dimensions to two. We note again that these results can be extended to more dimensions in a straightforward way e.g. by using higher-dimensional convolutions.

The general procedure of the proposed approach can be described as follows. Initially, a dataset of solutions is generated using classical finite element solvers for randomly drawn parameters (datapoints), i.e., realizations of coefficient fields leading to respective FE solutions. Then, for each datapoint a multilevel decomposition of the solution is generated as formalized in~\eqref{eq:correction_solution_operator}.
A NN is trained to output the coarse grid solution and the subsequent level corrections from the input parameters, which are given as coefficient vectors of the coefficient realizations on the respective FE space. The network prediction is obtained by summing up these contributions. Finally, to test the accuracy of the network, a set of independent FE solutions is generated and error metrics of the NN predictions are computed.

The centerpiece of our methodology is called $\MLNet$ (short for \emph{multilevel network}). For a prescribed number of levels $L\in\NN$, it consists of $L$ jointly trained subnetworks $\MLNet_\ell$. Each subnetwork is optimized to map its input -- the discretized parameter dependent diffusion coefficient $\kappa_{\bfy}$ and the output of the previous level $\levelOutput[\ell-1]$ -- to the level $\ell$ correction $\hat{\bfv}_\ell(\bfy)$ as in~\eqref{eq:correction_solution_operator}.
The $\MLNet_\ell$ subnetwork architecture is inspired by a cascade of multigrid V-cycles, which is implemented by a sequence of $\UNet$s \citep{ronneberger_u_net}. Using $\UNet$s in a sequential manner to correct outputs of previous modules successively is a well-established strategy in computer vision \citep[see e.g.,][]{ggmm22}.
We provide a theoretical foundation for this design choice in the next section. Since we assume the right-hand side $f \in H^{-1}(D)$ to be independent of $\bfy$, we do not include it as an additional input. More precisely,
\begin{equation*}
        \MLNet[\NNparam; L]:\RR^{\dim V_L}\to \bigotimes_{\ell=1}^L \RR^{\dim V_\ell}, \quad
        \vec{\kappa} \mapsto (\levelOutput)_{\ell=1}^L,
\end{equation*}
where $\levelOutput \in \RR^{\dim V_\ell}$ is the output of the level $\ell$ subnetwork and $\NNparam$ are the learnable parameters. For the subnetworks, we have
\begin{align*}
    &\MLNet_1[\NNparam_1; L]\colon\RR^{\dim V_L} \to \RR^{\dim V_1},\\
    &\MLNet_\ell[\NNparam_\ell; L]\colon\RR^{\dim V_L}\times \RR^{\dim V_{\ell - 1}}\to \RR^{\dim V_\ell},\quad \text{for } \ell=2,\ldots,L.
\end{align*}
The outputs are given by
\begin{align*}
    \levelOutput[1] &= \MLNet_1[\NNparam_1; L](\vec{\kappa}),\\
    \levelOutput &= \MLNet_\ell[\NNparam_\ell; L](\vec{\kappa}, \levelOutput[\ell - 1]),\quad \text{for } \ell=2,\ldots,L.
\end{align*}
Again, $\NNparam_\ell$ are the learnable parameters and $\NNparam = (\NNparam_\ell)_{\ell=1}^L$.

Each level consists of a sequence of $\UNet$s of depth $\ell$ with input dimension equal to $\dim V_\ell$ (arranged on a 2D grid). By coupling the $\UNet$ depth with the level, we ensure that the $\UNet$s on each level internally subsample the data to the coarse grid resolution $\dim V_1$ (again arranged on a 2D grid). Consequently, the number of parameters per $\UNet$ increases concurrently with the level. To distribute the compute effort (roughly) equally across levels, we decrease the number of $\UNet$s per level, which we denote by $\vec{R}_\ell$, where $\vec{R}\in\NN^L$. The NN architecture on level $\ell$ is given by
\begin{equation*}
    \MLNet_\ell[\NNparam_\ell; L](\vec{\kappa}, \vec{z}) := \left[\bigcirc_{r=1}^{\vec{R}_\ell}\UNet[\NNparam_{\ell, r}; \ell]([\downarrow\vec{\kappa},\cdot])\right](\uparrow\vec{z}),
\end{equation*}
where $\downarrow$ ($\uparrow$) denotes the up-sampling (down-sampling) operator to level $\ell$ resolution $\dim V_\ell$. These operators are motivated by the prolongation and restriction operators of Definition~\ref{def:prolongation_restriction}.
However, they are comprised of trainable strided (or transpose-strided) convolutions instead of an explicit construction. The parameters of the level $\ell$ subnetwork $\NNparam_\ell$ consist of the parameters of the learnable up- and down-sampling and the $\UNet$-sequence $(\NNparam_{\ell, r})_{r=1}^{\vec{R}_\ell}$.

Our complete approach to tackle a parametric PDE with $\MLNet$ consists of the following multi-step procedure:
\begin{itemize}[wide=0em]
    \item[\textbf{Step 1: Sampling and computing the data.}]
    We denote by $N\in \NN$ the training dataset size and sample parameter realizations $\bfy_1, \ldots, \bfy_N$ drawn according to $\pi$. For each parameter realization $\bfy_i$, $1\leq i\leq N$, the grid corrections $\hat{\bfv}_\ell(\bfy_i)$ on each level $1\leq \ell \leq L$ as in~\eqref{eq:correction_solution_operator} and the FE coefficient vector $\bfkappa_{\bfy_i} \in \RR^{\dim V_L}$ of $\kappa_L(\cdot, \bfy_i)$ on the finest level $L$ are computed. Note that the evaluation of the grid corrections requires the PDE solution for each parameter on the finest grid.
    
    \item[\textbf{Step 2: Training the $\MLNet$.}]
    For each level $\ell \in \{1,\ldots,L\}$, compute the normalization constants $\delta_\ell^2:= \frac{1}{N} \sum_{i=1}^N \norm{\hat{\bfv}_\ell(\bfy_i)}_2^2$ and the $H^1$ mass matrix
    $\vec{M}_\ell \in \RR^{\dim V_\ell \times \dim V_\ell}$ by
    \begin{align*}
        (\vec{M}_\ell)_{k j} := \inner{ \phi^{(\ell)}_k}{ \phi_j^{(\ell)}}_{H^1(D)} ,\quad\text{for}\quad k,j \in \{1,\ldots,\dim V_\ell\},
    \end{align*}
    where $\{\phi^{(\ell)}_1,\ldots, \phi^{(\ell)}_{\dim V_\ell}\}$ constitutes the conforming P1 FE basis of $V_\ell$. We optimize the parameters $\NNparam$ by approximately solving
    \begin{equation*}
        \min_{\NNparam}\sum_{\ell=1}^L \sum_{i=1}^N  \bfd_{\ell i}^T \vec{M}_\ell \bfd_{\ell i}, \quad\text{where}\quad \bfd_{\ell i} \coloneqq \big[\MLNet[\NNparam; L](\bfkappa_{\bfy_i})\big]_\ell - \frac{1}{\delta_\ell} \hat{\bfv}_\ell(\bfy_i)
    \end{equation*}
    with mini-batch stochastic gradient descent. Our loss computes the $H^1$-distance between the subnetwork output as a coefficient of the FE basis of $V_\ell$ and the normalized grid corrections directly on the coefficients for each level $\ell$, i.e.,
    \[
    \bfd_{\ell i}^T \vec{M}_\ell \bfd_{\ell i} = \norm{\sum_{j=1}^{\dim V_\ell} \left[\MLNet[\NNparam^*;L](\bfkappa_{\bfy_i})\right]_{\ell j} \phi_j^{(\ell)} - \frac{1}{\delta_\ell}\hat v_\ell(\bfy_i)}_{H^1(D)}.
    \]
    Normalization via $\delta_\ell$ assures that each level is weighted equally in the loss function.
    
    \item[\textbf{Step 3: Inference.}]
    Denoting the optimized parameters by $\NNparam^*$, the approximate solution (i.e., the NN prediction) for $\bfy\in\Gamma$ is computed by
    \begin{align*}
        v(\bfy) \approx \sum_{\ell=1}^L \delta_\ell \sum_{j=1}^{\dim V_\ell} \left[\MLNet[\NNparam^*;L](\bfkappa_\bfy)\right]_{\ell j} \phi_j^{(\ell)}.
    \end{align*}
\end{itemize}

\begin{figure}
    \centering
    \includegraphics[scale=0.13]{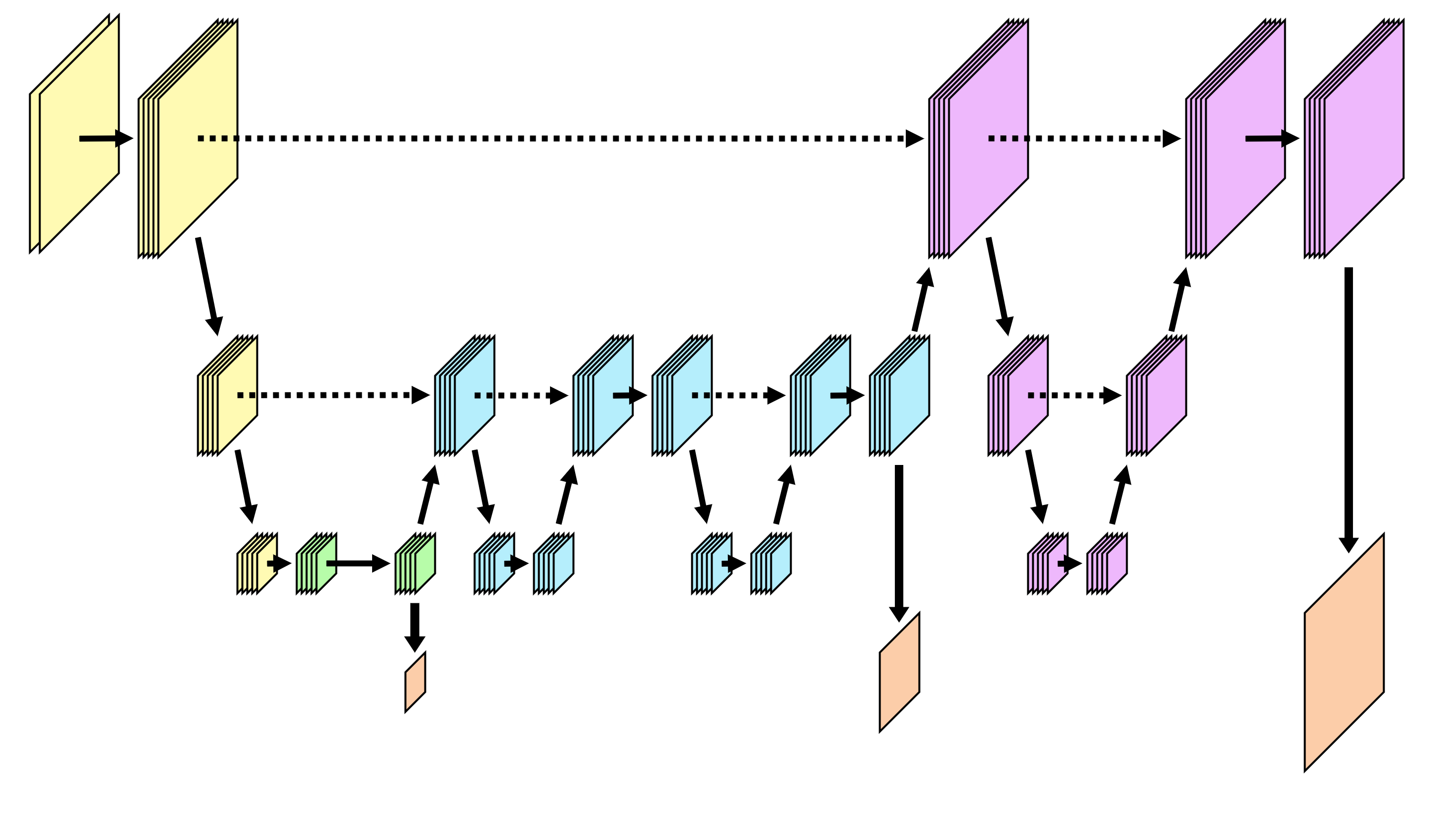}
    \caption{Illustration of the structure of the proposed $\MLNet$. The down-sampling cascade is shown in yellow. In this example, the network is constructed for $L=3$ with the green, cyan and magenta parts representing the individual stages on each level. The outputs are shown in orange. Solid arrows represent convolutions and dashed arrows skip connections.}
    \label{fig:fnet}
\end{figure}

In the following remark, a simpler baseline method is introduced that can be seen as a special case of $\MLNet$.
\begin{remark}\label{rmk:introduction_unetseq}
As a comparison and to assess the benefits of the described multilevel approach, we propose a simpler NN, which directly maps the discretized parameter dependent diffusion coefficient $\kappa_{\bfy}$ to the coefficients on the finest level. For this, we denote by $\VNet$ a sequence of $\UNet$s which can be seen as a variation of $\MLNet$ where $\vec{R}_\ell = 0$ for levels $\ell = 1, \ldots, L-1$, i.e., only the highest level sub-module is used. This architecture is a hybrid approach between single-level and multi-level because of the internal multi-resolution structure of the $\UNet$ blocks.
For this architecture, we choose the number of $\UNet$s such that the total parameter count roughly matches $\MLNet$.
\end{remark}

\section{Theoretical analysis}
\label{sec:analysis}
In this section, we show that UNet-like NN architectures are able to approximate multigrid V-cycles on $V_h$ up to arbitrary accuracy $\veps>0$ with the number of parameters independent on $\veps$ and growing only logarithmically in $1/h$ (Theorem~\ref{thm:cnn_multigrid}). We use this result in Corollary~\ref{crl:conv_multigrid_repr} to approximate the discrete solution $\bfv_h \colon \Gamma \to \RR^{\dim V_h}$ of the parametric PDE Problem~\ref{def:darcy_discretized} with arbitrary accuracy $\veps$ by a NN with the number of parameters bounded from above by $\log(\nicefrac{1}{h})\log(\nicefrac{1}{\veps })$ independent of the stochastic parameter dimension $p$. This is achieved by a NN that emulates a multigrid solver applied to each $\bfy$ individually and, thus, approximately computes the solution $\bfv_h(\bfy)$. Eventually, we derive similar approximation bounds for the $\MLNet$ architecture and point out computational advantages of $\MLNet$ over pure multigrid approximations of Corollary~\ref{crl:conv_multigrid_repr}. Note that while only the two-dimensional setting is considered in this section, the results can be translated to arbitrary spatial dimensions.

The rigorous mathematical analysis of NNs requires an extensive formalism, which can e.g., be found in~\citep{petersen2017optimal,guhring2019error,guhring2021approx}.
We assume that readers interested in the proofs are familiar with common techniques such as concatenation/parallelisation of NNs and the approximation of polynomials, the identity, and algebraic operations.
To make our presentation more accessible from a ``practitioner point of view'', we present a streamlined exposition which still includes sufficient details to follow the arguments of the proofs. All proofs can be found in Appendix~\ref{app:proofs}.

\emph{We consider a CNN as any computational graph consisting of (various kinds of) convolutions in combination with standard building blocks from common DL libraries (e.g., skip connections and pooling layers)}.
Note that this conception also includes the prominent UNet architecture \citep{ronneberger_u_net} and other UNet-like architectures with the same recursive structure. 

For our theoretical analysis, we consider activation functions $\varrho\in L^{\infty}_{\mathrm{loc}}(\RR)$ such that there exists $x_0\in \RR$ with $\varrho$ three times continuously differentiable in a neighborhood of some $x_0\in\RR$ and $\varrho''(x_0)\neq 0$. This includes many standard activation functions such as exponential linear unit, softsign, swish, and sigmoids. For (leaky) ReLUs a similar but more involved analysis would be possible, which we avoid for the sake of simplicity.

For a CNN $\Psi$, we denote the number of weights by~$M(\Psi)$. Since we consider the FE spaces $V_h$ in the two-dimensional setting on uniformly refined square meshes, FE coefficient vectors $\bfx \in \RR^{\dim V_h}$ can be viewed as two-dimensional arrays. Whenever $\bfx$ is processed by a CNN, we implicitly assume a 2D matrix representation. We define $\norm{\bfx}_{H^1}\coloneqq {\big\lVert\sum_{i=1}^{\dim V_h}\bfx_i \phi_i\big\rVert}_{H^1}$. For this section, we set $D=[0, 1]^2$ and assume uniform ellipticity for Problem~\ref{def:darcy_discretized}.

We start with the approximation of the solution of the multigrid V-cycle $\mg_{k_0, k}^m$ with initialization $\bfu_0$, diffusion coefficient $\bfkappa$, and right-hand side $\bff$ by UNet-like CNNs.

\begin{theorem}
\label{thm:cnn_multigrid}
    Let ${V_h \subset H_0^1(D)}$ be the P1 FE space on a uniform square mesh. Then there exists a constant $C>0$ such that for any $M, \veps>0$ and $m,k,k_0 \in \NN$ there exists a CNN $\Psi \colon \RR^{3 \times \dim V_h} \to \RR^{\dim V_h}$ with
    \begin{enumerate}[label=(\roman*)]
    \item 
    $\norm{\Psi(\bfu_0, \bfkappa, \bff) - \mg_{k_0, k}^m(\bfu_0, \bfkappa, \bff)}_{H^1(D)} \leq \veps$ for all $\bfkappa,\bfu_0,\bff \in [-M, M]^{\dim V_h}$,
    \item number of weights bounded by $M(\Psi) \leq C \left(k_0 + k \log\left(\frac{1}{h }\right)\right) m$. 
    \end{enumerate}
\end{theorem}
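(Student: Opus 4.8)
The plan is to emulate the V-cycle recursion of Algorithm~\ref{algo:v-cycle} operation by operation, realising each elementary step as a CNN building block and then bounding the total number of shared convolutional weights. First I would observe that, on the uniform square mesh, the prolongation $P_\ell$ and weighted restriction $P_\ell^T$ of Definition~\ref{def:prolongation_restriction} are fixed (coefficient-independent) stencils, hence exactly representable by a single strided, respectively transpose-strided, convolution with $O(1)$ weights and no approximation error. The damped Richardson smoother $\bfu \mapsto \bfu + \omega(\bff - A\bfu)$ is affine in $\bfu$ and $\bff$ apart from the operator application $A\bfu$; and, by Remark~\ref{rmk:inexact_coarse_is_enough}, the coarsest-grid solve may be replaced by $k_0$ Richardson steps, so that no matrix inverse ever has to be emulated. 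Consequently the only genuinely nonlinear ingredient of the whole cycle is the variable-coefficient operator application.

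The second step is to represent this operator application as a pointwise bilinear stencil. Writing $\kappa_h = \sum_k (\bfkappa)_k \phi_k$ gives $(A_\bfy \bfu)_i = \sum_{j,k} c_{ijk} (\bfkappa)_k \bfu_j$ with geometric constants $c_{ijk}$ that, on the uniform triangulation, are nonzero only for $i,j,k$ in a fixed local neighbourhood. Thus $A_\bfy \bfu$ is a finite sum of terms of the form (fixed convolution of $\bfkappa$) $\odot$ (fixed convolution of $\bfu$), where $\odot$ is entrywise multiplication. The Galerkin coarse operators $P_\ell^T A P_\ell$ inherit this structure: since $P_\ell$ is fixed, each coarse stencil is again linear in $\bfkappa$ with a neighbourhood whose size stays bounded across levels, so the required coarse coefficient can be produced from $\bfkappa$ by a fixed linear convolution and the coarse application is again a local bilinear stencil. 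The key tool for the nonlinearity is the standard fact that, for an activation $\varrho$ with $\varrho''(x_0)\neq 0$, the scalar map $(x,y)\mapsto xy$ can be approximated to \emph{arbitrary} precision by a subnetwork of $O(1)$ weights (a finite-difference squaring block $\tfrac{1}{\lambda^2\varrho''(x_0)}[\varrho(x_0+\lambda x)-2\varrho(x_0)+\varrho(x_0-\lambda x)] \to x^2$ combined with polarisation); crucially, refining the precision only rescales the weights and does \emph{not} increase their number. Applied pointwise with shared weights, this turns every operator application into a convolutional block of $O(1)$ weights, independent of both $\veps$ and $\dim V_h$.

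Counting is then straightforward. One V-cycle performs $k$ pre- and $k$ post-smoothing steps on each of the $L\sim\log(\nicefrac{1}{h})$ levels and $k_0$ steps on the coarsest grid; each smoothing step uses one operator application, and all prolongations, restrictions and coefficient transfers contribute only a bounded number of additional convolutions per level. Hence a single cycle costs $O\bigl(k\log(\nicefrac{1}{h})+k_0\bigr)$ blocks of $O(1)$ weights each, and stacking $m$ cycles yields the claimed bound $M(\Psi)\le C\bigl(k_0+k\log(\nicefrac{1}{h})\bigr)m$, with $C$ absorbing the fixed stencil and product-block sizes.

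The delicate part, and the one I expect to be the main obstacle, is the error and stability analysis through the recursion. Because the exact cycle $\mg_{k_0,k}^m$ is, on the compact input set $[-M,M]^{3\times\dim V_h}$, a fixed composition of Lipschitz maps, replacing every product by its $\delta$-accurate surrogate produces a total deviation controlled by $\delta$ times a stability constant that compounds along the $O((k\log(\nicefrac{1}{h})+k_0)m)$ elementary operations. I would make this rigorous by an induction over the V-cycle levels that simultaneously (i) confines all intermediate feature maps to a slightly enlarged box $[-M',M']$ on which the finite-difference product blocks are valid, and (ii) propagates the accumulated $\ell^2$ error. Finally, converting the resulting $\ell^2$ bound into the required $H^1(D)$ bound only costs the fixed, $h$-dependent norm-equivalence constant of $V_h$. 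Since $h$, $m$, $k$, $k_0$ and $M$ are all fixed while $\delta$ may be taken as small as needed \emph{without} changing the weight count, choosing $\delta$ small enough forces the $H^1$ error below $\veps$, which completes the argument.
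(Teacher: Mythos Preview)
Your proposal is correct and follows essentially the same route as the paper: decompose the V-cycle into elementary steps (smoothing, residual, restriction, prolongation, coarse solve replaced by $k_0$ Richardson steps), realise prolongation/restriction exactly as fixed strided convolutions, represent $A_\kappa\bfu$ as a local bilinear stencil $\sum_k(\text{conv of }\bfkappa)\odot(\text{conv of }\bfu)$, approximate the pointwise product by a fixed-size block using $\varrho''(x_0)\neq 0$, and then count. The only minor differences are packaging: the paper encodes the $\bfkappa$-dependence via the six triangle-integral channels $\tmeans(\kappa,\Tri,k)$ (Lemma~\ref{lmm:kappa_conv}, Theorem~\ref{thm:a_kappa_representation}) and propagates level to level by a fixed convolution on those channels, whereas you invoke the Galerkin property to argue the coarse stencil stays bounded; and for error control the paper uses a generic uniform-continuity-on-compacta composition lemma (Lemma~\ref{lmm:approximation_function_combination}) rather than an explicit Lipschitz induction, but both lead to the same conclusion that $\delta$ can be driven to zero without changing the weight count.
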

In the next remark, we provide more details about the architecture of $\Psi$ and its implications.

\begin{remark}
    The proof of Theorem~\ref{thm:cnn_multigrid} reveals that $\Psi$ resembles the concatenation of multiple UNet-like subnetworks, each approximating one V-cycle. We draw two conclusions from that:
    \begin{itemize}
        \item Generally, this supports the heuristic relation between UNets and multigrid methods with a rigorous mathematical analysis indicating a possible reason for their success in multiscale-related tasks e.g., found in medical imaging \citep{unetsurvey} or PDE-related problems \citep[see][]{unet_cascade_deblurring, unet_weather_stuff}.
        \item Together with our numerical results in the following section, this underlines the suitability of $\VNet$ (see Remark~\ref{rmk:introduction_unetseq}) for the solution of Problem~\ref{def:darcy_discretized}. This is made more specific in the next corollary.
    \end{itemize}
\end{remark}

The following corollary is an easy consequence of Theorem~\ref{thm:cnn_multigrid} for the parametric PDE Problem~\ref{def:darcy_discretized} by applying the multigrid solver to solve Equation~\eqref{eq:discretized_solution_operator} for each $\bfy\in \Gamma$ individually. We use that the required number of smoothing iterations in Theorem~\ref{thm:hackbusch_thm} is grid independent and choose the number of V-cycles as $m\approx \log(\nicefrac{1}{\veps)}$. An application of the triangular inequality yields the following result.
\begin{corollary}
\label{crl:conv_multigrid_repr}
    Consider the discretized Problem~\ref{def:darcy_discretized} with the conforming P1 FE space $V_h \subset H_0^1(D)$.
    Assume that $\bfkappa_\bfy$ is uniformly bounded over all $\bfy \in \Gamma$. Then there exists a constant $C>0$ such that for any $\veps>0$ there exists a CNN $\Psi \colon \RR^{2 \times \dim V_h} \to \RR^{\dim V_h}$ with
     \begin{enumerate}[label=(\roman*)]
    \item 
   $ \norm{\Psi(\bfkappa_\bfy, \bff) - \bfv_h(\bfy)}_{H^1(D)} \leq \veps \norm{f}_\ast$ for all $\bfy \in \Gamma$,
   \item number of weights bounded by $M(\Psi) \leq C\log\left(\frac{1}{h }\right)\log \left(\frac{1}{\veps }\right)$.
    \end{enumerate}
\end{corollary}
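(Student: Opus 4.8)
The plan is to reduce the parametric statement to a purely deterministic approximation question by emulating the multigrid solver separately for each realisation $\bfy$, and then to invoke Theorem~\ref{thm:cnn_multigrid}. Concretely, I would fix the damping parameter $\omega$ and the smoothing counts $k,k_0\in\NN$ to the grid-independent values furnished by Theorem~\ref{thm:hackbusch_thm}, so that a single V-cycle contracts the energy-norm error by a factor $\mu(k)<1$ that is uniform in $\bfy$. The target CNN is then obtained by approximating $m$ such V-cycles started from the zero initial guess, with $m$ chosen of order $\log(\nicefrac{1}{\veps})$.

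First I would control the multigrid error analytically. Starting from $\bfu_0=\mathbf{0}$ gives $\bfe^0_\bfy=-\bfv_h(\bfy)$, and testing $A_\bfy\bfv_h(\bfy)=\bff$ against $\bfv_h(\bfy)$ together with uniform ellipticity yields $\norm{\bfv_h(\bfy)}_{A_\bfy}\lesssim\norm{f}_\ast$ with a constant independent of $\bfy$. Iterating the contraction of Theorem~\ref{thm:hackbusch_thm} then gives $\norm{\bfe^m_\bfy}_{A_\bfy}\leq\mu(k)^m\,\norm{\bfv_h(\bfy)}_{A_\bfy}\lesssim\mu(k)^m\norm{f}_\ast$. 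Since uniform ellipticity makes the energy norm $\norm{\cdot}_{A_\bfy}$ and the $H^1$-norm equivalent with $\bfy$-independent constants, this bounds $\norm{\mg_{k_0,k}^m(\mathbf{0},\bfkappa_\bfy,\bff)-\bfv_h(\bfy)}_{H^1}$ by $C\mu(k)^m\norm{f}_\ast$. Choosing $m\approx\log(\nicefrac{1}{\veps})/\log(\nicefrac{1}{\mu(k)})$ makes this at most $\tfrac{\veps}{2}\norm{f}_\ast$.

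Next I would apply Theorem~\ref{thm:cnn_multigrid}. The uniform bound on $\bfkappa_\bfy$ together with the fixed right-hand side $\bff$ and the fixed initialisation $\mathbf{0}$ confine all arguments to a box $[-M,M]^{\dim V_h}$, so Theorem~\ref{thm:cnn_multigrid} provides a CNN approximating $\mg_{k_0,k}^m$ to $H^1$-accuracy $\tfrac{\veps}{2}\norm{f}_\ast$ using at most $C(k_0+k\log(\nicefrac{1}{h}))\,m$ weights. Hard-wiring the first input slot to the constant $\mathbf{0}$ turns this into the desired map $\Psi\colon\RR^{2\times\dim V_h}\to\RR^{\dim V_h}$, $(\bfkappa_\bfy,\bff)\mapsto\Psi(\bfkappa_\bfy,\bff)$, without increasing the weight count. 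A triangle inequality combining the CNN approximation error with the multigrid convergence bound of the previous paragraph yields $\norm{\Psi(\bfkappa_\bfy,\bff)-\bfv_h(\bfy)}_{H^1}\leq\veps\norm{f}_\ast$, and since $k,k_0$ are grid-independent constants and $m\approx\log(\nicefrac{1}{\veps})$, the weight bound collapses to $M(\Psi)\leq C\log(\nicefrac{1}{h})\log(\nicefrac{1}{\veps})$.

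I expect the only genuine subtlety to be the uniformity in $\bfy$: one must check that both the contraction factor $\mu(k)$ and the norm-equivalence constants relating $\norm{\cdot}_{A_\bfy}$ to $\norm{\cdot}_{H^1}$ can be chosen independently of the parameter, which is exactly what the uniform ellipticity assumption guarantees. Everything else is bookkeeping of constants so that the $\veps$-dependence enters solely through the number of V-cycles $m$, leaving the per-cycle weight cost $k_0+k\log(\nicefrac{1}{h})$ untouched.
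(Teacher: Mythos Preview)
Your proposal is correct and follows essentially the same approach as the paper's proof: fix the grid-independent smoothing parameters from Theorem~\ref{thm:hackbusch_thm}, choose $m\approx\log(\nicefrac{1}{\veps})$ so that the multigrid contraction yields $\tfrac{\veps}{2}\norm{f}_\ast$ accuracy, invoke Theorem~\ref{thm:cnn_multigrid} for the CNN approximation of $\mg_{k_0,k}^m$ with the other $\tfrac{\veps}{2}\norm{f}_\ast$, and combine by the triangle inequality. The paper additionally notes the harmless technicality that $\veps\leq 1$ is assumed to absorb the additive constant in $m$ into $C\log(\nicefrac{1}{\veps})$, but otherwise the arguments coincide.
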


Setting the NN approximation error equal to the FE discretization error ($\veps = h$), we get the bound $M(\Psi)\leq C \log(\nicefrac{1}{h})^2$. This shows that the number of parameters at most grows polylogarithmically in $1/ h$ and is independent of the stochastic parameter dimension $p$. 

Finally, we provide complexity estimates for the $\MLNet$ architecture in the next corollary. Here, at each level of $\MLNet$ a multigrid V-cycle (on the respective level) is approximated by using Theorem~\ref{thm:cnn_multigrid}.
\begin{corollary}
\label{crl:fnet_repr_theorem}
    Consider the discretized Problem~\ref{def:darcy_discretized}.
    Let the spaces $V_1, \ldots, V_L$ be defined as in Section~\ref{sec:multilevel_spaces} for $L \in \NN$, $f \in H^{-1}(D)$ and its discretization denoted by $\bff \in \RR^{\dim V_L}$. Assume that the discretized diffusion coefficient $\bfkappa_\bfy\in \RR^{\dim V_L}$ is uniformly bounded over all $\bfy \in \Gamma$.
    Then, there exists a constant $C > 0$ such that for every $\veps >0$ there exists an $\MLNet$ $\Psi$ (as in Section~\ref{sec:method}) with
    \begin{enumerate}[label=(\roman*)]
        \item $\norm{\Psi(0, \bfkappa_\bfy, \bff) - \bfv_L(\bfy)}_{H^1(D)} \leq \veps \norm{f}_\ast$ for all $\bfy \in \Gamma$,
        \item $M(\Psi) \leq C L \log\left(\frac{1}{\veps} \right) + C L^2$.
    \end{enumerate}
\end{corollary}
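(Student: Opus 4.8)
The plan is to realise the $\MLNet$ so that its $\ell$-th subnetwork emulates $m_\ell$ \emph{warm-started} multigrid V-cycles on level $\ell$, and to exploit the exponential decay of the corrections $\hat{\bfv}_\ell$ so that the expensive $\log(1/\veps)$ factor is incurred on only one level. The subnetwork $\MLNet_\ell(\bfkappa,\bfz^{(\ell-1)})=[\bigcirc_r\UNet](\uparrow\bfz^{(\ell-1)})$ is already a prolongate-then-refine step, so I would choose the UNet weights via Theorem~\ref{thm:cnn_multigrid} so that it approximates $\mg_{k_0,k}^{m_\ell}(\uparrow\bfz^{(\ell-1)},\downarrow\bfkappa,\bff_\ell)$, i.e. $m_\ell$ V-cycles on $V_\ell$ started from the prolongated previous output. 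With this reading each $\bfz^{(\ell)}$ tracks the \emph{cumulative} solution $\bfv_\ell$ (not the correction), and the finest output $\bfz^{(L)}$ is the desired approximant of $\bfv_L$; since for the expressivity statement we may fix the output-combination weights freely, this is consistent with the architecture of Section~\ref{sec:method}. Because $\log(1/h_\ell)\sim\ell$, Theorem~\ref{thm:cnn_multigrid}(ii) gives $M(\MLNet_\ell)\le C(k_0+k\ell)m_\ell$, and the per-level emulation errors from Theorem~\ref{thm:cnn_multigrid} can be pushed below any threshold at no cost in weights (its weight bound is independent of its accuracy $\veps$), so I would carry them as a negligible additive term.

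Next I would track the energy-norm error $\delta_\ell:=\norm{\bfz^{(\ell)}-\bfv_\ell}_{A_\ell}$. Running the level-$\ell$ V-cycles from $\uparrow\bfz^{(\ell-1)}$, the initial error is at most $\norm{\bfz^{(\ell-1)}-\bfv_{\ell-1}}_{A_{\ell-1}}+\norm{P\bfv_{\ell-1}-\bfv_\ell}_{A_\ell}$, the first term being $\delta_{\ell-1}$ (the prolongation is the canonical embedding and preserves the energy norm) and the second equalling $\norm{\hat{\bfv}_\ell}_{A_\ell}\lesssim 2^{-\ell}\norm{f}_\ast$ by the stated decay of the corrections together with $\norm{\cdot}_{A_\ell}\sim\norm{\cdot}_{H^1}$ under uniform ellipticity. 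The grid-independent contraction of Theorem~\ref{thm:hackbusch_thm} then yields the recursion $\delta_\ell\le\mu(k)^{m_\ell}\big(\delta_{\ell-1}+c\,2^{-\ell}\norm{f}_\ast\big)$. Unrolling this with a single V-cycle per level ($m_\ell\equiv1$) and damping chosen so that $\mu(k)<\tfrac12$, the geometric sum collapses to $\delta_L\lesssim 2^{-L}\norm{f}_\ast$: one warm-started V-cycle per level already reaches finest-grid accuracy, at cost $\sum_{\ell=1}^L C(k_0+k\ell)\sim CL^2$.

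It then remains to boost the accuracy from $2^{-L}$ to an arbitrary $\veps$. If $\veps\ge2^{-L}$ the baseline already suffices and $M(\Psi)\le CL^2\le CL\log(1/\veps)+CL^2$. If $\veps<2^{-L}$, I would append $m'$ further V-cycles on the finest level only: warm-started from the baseline output they each contract $\delta_L$ by $\mu(k)$, so $m'\sim\beta\big(\log(1/\veps)-L\big)$ with $\beta=1/\log(1/\mu(k))$ drives $\delta_L$ below $\veps\norm{f}_\ast$. These extra UNet blocks sit in $\MLNet_L$ and cost $C(k_0+kL)m'\lesssim CL\log(1/\veps)$, while every coarser level keeps its single block and contributes $CL^2$ in total. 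Summing gives $M(\Psi)\le CL\log(1/\veps)+CL^2$ as claimed; the crucial point is that the decay of the $\hat{\bfv}_\ell$ lets almost all levels get away with $O(1)$ cycles, so $\log(1/\veps)$ multiplies only the single finest level rather than all $L$ of them.

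The main obstacle is exactly this bookkeeping: the naive construction that solves each level from scratch to accuracy $\veps$ needs $\Theta(\log(1/\veps))$ cycles on \emph{every} level and produces the inferior bound $CL^2\log(1/\veps)$; the additive split $CL\log(1/\veps)+CL^2$ only emerges from the warm-started, telescoping recursion above, in which coarse levels need a bounded number of cycles. Two technical points I would also settle: checking the hypotheses of Theorem~\ref{thm:cnn_multigrid} (the intermediate iterates and the coefficient remain in a fixed box $[-M,M]^{\dim V_\ell}$, using the assumed uniform bound on $\bfkappa_\bfy$ and on the solutions), and controlling the propagation of the per-level emulation errors through the composition. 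Both are routine, since the V-cycle map is a contraction on the error and the emulation errors can be made summably small independently of the weight budget.
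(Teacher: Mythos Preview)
Your proposal is correct and takes essentially the same approach as the paper: both warm-start the level-$\ell$ multigrid from the prolongated previous output, exploit the decay $\|\hat{\bfv}_\ell\|_{H^1}\lesssim 2^{-\ell}\|f\|_*$ together with the grid-independent contraction of Theorem~\ref{thm:hackbusch_thm} to obtain a telescoping error recursion, and allocate a bounded number of V-cycles on levels $1,\ldots,L-1$ while placing the $O(\log(1/\veps))$ extra cycles only on the finest level. The paper phrases the per-level solve equivalently as multigrid from zero with the residual right-hand side $\tilde{\bff}_\ell=\bff_\ell-A^\ell P_\ell\tilde{\bfv}_{\ell-1}$ (so that the subnetwork outputs the correction rather than the cumulative solution), and handles your two ``routine technical points'' via a lemma on composing finitely many continuous maps each approximated by a CNN.
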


In the following remark, we discuss the relation of the previous two corollaries.
\begin{remark}\label{rmk:computational_complexity}
Equilibrating approximation and discretization error, i.e., setting $\veps = h = 2^{-L}$, we make two observations:
\begin{itemize}
    \item Both, Corollary~\ref{crl:conv_multigrid_repr} and \ref{crl:fnet_repr_theorem}, yield an upper bound for the number weights of $M(\Psi) \leq C L^2$. We conclude that from our upper bounds no advantage in terms of expressiveness for using $\MLNet$ over $\VNet$ is evident. In Section \ref{sec:results} we show that our numerical experiments support this finding.
    \item Despite a similar number of parameters in relation to the approximation accuracy, a forward pass of ML-net is more efficient than a forward pass of $\VNet$. This is due to the multilevel structure of the $\MLNet$ that shifts the computational load more evenly across resolutions. Note that the number of parameters of a convolutional filter is independent on the input resolution, whereas the computational complexity of convolving the input with the filter is not. In detail, the number of operations for a forward pass of $\MLNet$ is $\mathcal{O}(2^{2L}) = \mathcal{O}(h^{-2})$, while for $\VNet$, we have $\mathcal{O}(L 2^{2L}) = \mathcal{O}(\log(h^{-1}) h^{-2})$ computations. This is one advantage of $\MLNet$ that results in shorter training times (see Section~\ref{sec:experiments}).
\end{itemize}
\end{remark}

\section{Numerical results}
\label{sec:experiments}
This section is concerned with the evaluation of $\MLNet$ and the simpler $\VNet$ on several common benchmark problems in terms of the mean relative $H^1$-distance of the predicted solution.
Moreover, the performance of $\MLNet$ is tested with a declining number of training samples for successive grid levels.
This is motivated by multilevel theory and leads to a beneficial training complexity (see Remark~\ref{rmk:computational_complexity} for a more detailed evaluation).

We assess the performance of our methods for different choices of varying computational complexity for the coefficient functions $a_k$, comprising the diffusivity field $\kappa$ by~\eqref{eq:diffusion_coefficient_expansion} and different parameter distributions $\pi$. For all test cases, we choose the unit square domain $D=[0,1]^2$, $a_0(x)\coloneqq a_0\in\RR$ constant and the right-hand side $f\equiv 1$.
The test problems are defined as follows.
\begin{enumerate}
\label{enum:kappa_choices}
    \item \textbf{Uniform smooth field.} Let $\Gamma = [-1,1]^{p}$ and $\pi = U[-1,1]^p$.
    Moreover, let $a_0 := 1$ and choose $a_k$ as planar Fourier modes as described in \citep{eigel2019variational} and \citep{geist2020numerical} with decay $\norm{a_k}_\infty = 0.1k^{-2}$ for $1\leq k\leq p$.
    
    \item \textbf{Log-normal smooth field.} Let $\Gamma = \mathbb{R}^{p}$ and $\pi = N(\mathbf{0},\text{Id}_p)$. Define $\kappa(\bfy, x) := \exp(\tilde{\kappa}(\bfy, x))$, where $\tilde \kappa$ equals $\kappa$ from the uniform case with $a_0 := 0$.
    
    \item \textbf{Cookie problem with fixed radii inclusions.} Let $\Gamma = [-1,1]^{p}$, with $p$ having a natural square root and $\pi = U[-1,1]^p$. Moreover, define $a_0 := 0.1$ and $a_k(x) = \mathcal{X}_{D_k}(x)$, where $D_k$ are disks centered in a cubic lattice with a fixed radius $r=0.3p^{-1/2}$ for $1\leq k \leq p$.
    
    \item \textbf{Cookie problem with variable radii inclusions.} 
    As an extension of the cookie problem, we also assume the disk radii to vary. To this end, let $p'$ be the (quadratic) number of cookies, $p=2p'$, $\Gamma = [-1,1]^{p}$ and $\pi = U[-1,1]^p$. The diffusion coefficient is defined by
    \begin{align*}
        \kappa(\bfy,x) := a_0 + \sum_{k=1}^{p'} \bfy_{2k-1} \mathcal{X}_{\mathscr{D}_{k, r(\bfy_{2k})}}(x),
    \end{align*}
    where the $\mathscr{D}_{k, r}$ are disks centered in a cubic lattice with parameter dependent radius~$r$. We choose $r(y)$ for $y\in [-1, 1]$ such that the radii are uniformly distributed in $[0.5 (p')^{-1/2},0.9 (p')^{-1/2}]$.
\end{enumerate}
Figure~\ref{fig:example_diff_coeffs} depicts a random realization for each of the 4 problem cases. Note that not all of the above problems satisfy the theoretical assumptions for the results in Section~\ref{sec:analysis}. Notably, the log-normal case is not uniformly elliptic and our theoretical guarantees thus only hold with (arbitrarily) high probability as indicated above. Moreover, in case of the cookie problem with variable radii, the diffusion coefficient depends non-linearly on the parameter vector since also the basis functions $a_k$ are parameter dependent. We include these cases to study our methodology with diverse and particularly challenging setups.

\begin{figure}
    \centering
    \includegraphics[width=0.2435 \textwidth]{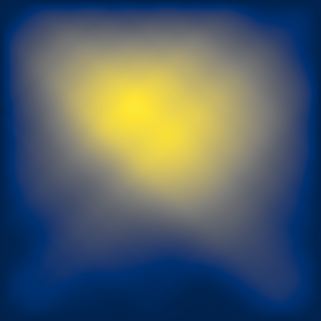}
    \includegraphics[width=0.2435 \textwidth]{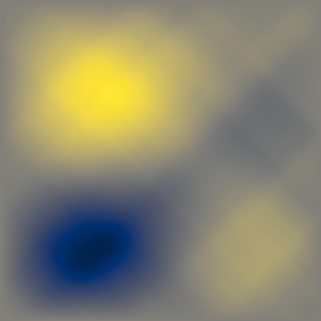}
    \includegraphics[width=0.2435 \textwidth]{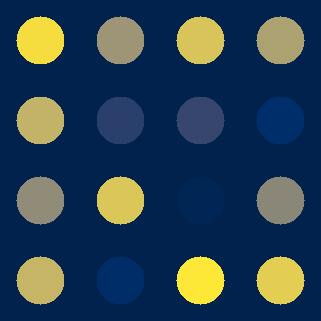}
    \includegraphics[width=0.2435 \textwidth]{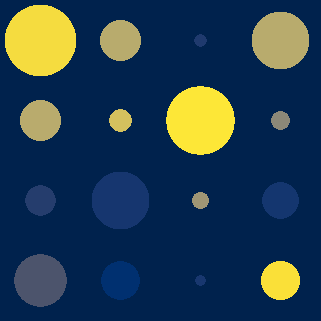}
    \caption{Example realizations of $\kappa(\cdot, \bfy)$ for the four test cases. From left to right: uniform case ($p=100$), log-normal case ($p=100$), cookie problem ($p=16$), cookie problem with variable radii ($p=32$). The colormaps are normalized for visibility.}
    \label{fig:example_diff_coeffs}
\end{figure}

We use the open source package \texttt{FEniCS} \citep{fenics_dolfin} with the GMRES \citep{gmres_paper} solver for carrying out the FE simulations to generate training data and \texttt{PyTorch} \citep{pytorch_cite} for the NN models.

If not stated otherwise, we use a training dataset with $10^4$ samples and $1024$ samples for independent validation computed from i.i.d. parameter vectors. For testing, the performance of the methods is evaluated on $1024$ independently generated test samples.

For the multilevel decomposition, we consider $L=7$ resolution levels starting with a coarse resolution of $5\times 5$ cells. Iterative uniform mesh refinement with factor $\eta=2$ results in $320 \times 320$ cells on the finest level. To reduce computational complexity, we only compute solutions on the finest level and derive the coarser solutions as well as the corrections in~\eqref{eq:correction_solution_operator} using nodal interpolation. Additionally, we compute reference test solutions on a high-fidelity mesh with $1280 \times 1280$ cells, resulting in about $1.6\times 10^6$ degrees of freedom.

For $\VNet$, we choose the number of $\UNet$s such that the total parameter count roughly matches the $5.6 \times 10^6$ parameters of $\MLNet$. For training, we use the Adam~\citep{adam_paper} optimizer and train for $200$ epochs with learning rate decay.
After each run, we evaluate the model with the best validation performance. Training took approximately $33$ GPU-hours on an NVIDIA Tesla P100 for $\MLNet$ and $46$ GPU-hours for $\VNet$ (see also Remark~\ref{rmk:computational_complexity}).

\begin{figure}
    \centering
    \includegraphics[scale=1.0]{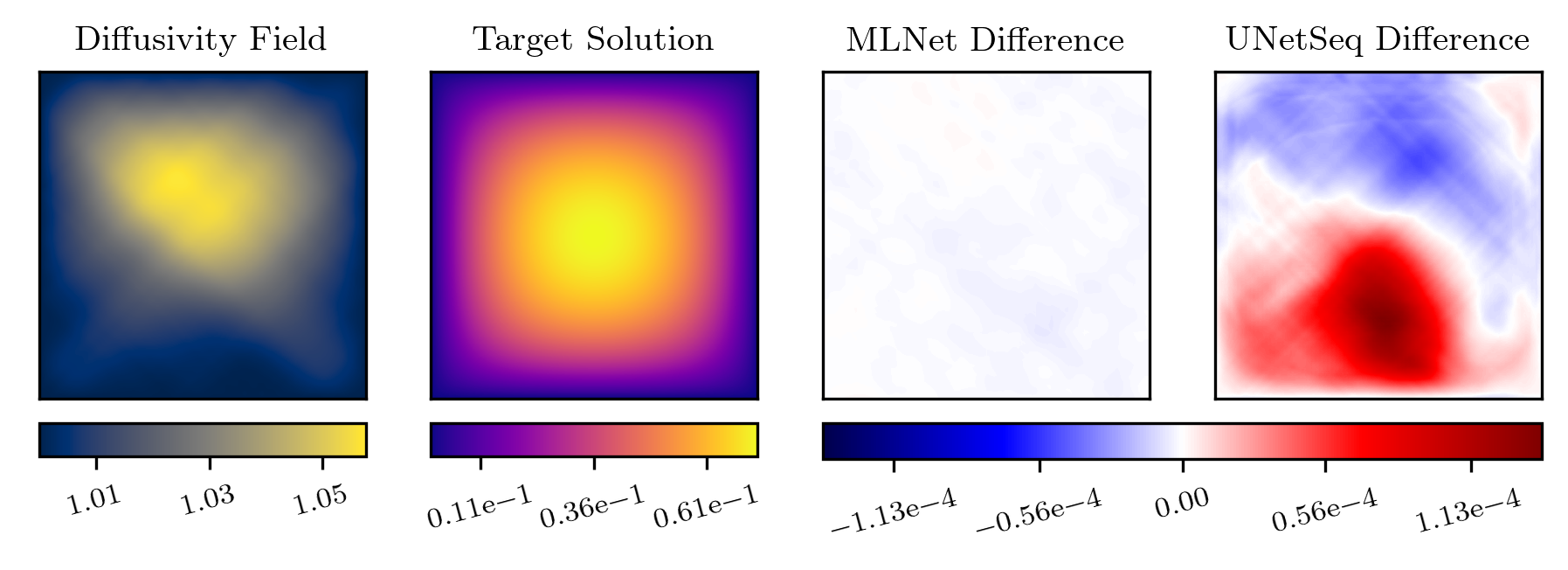}
    \includegraphics[scale=1.0]{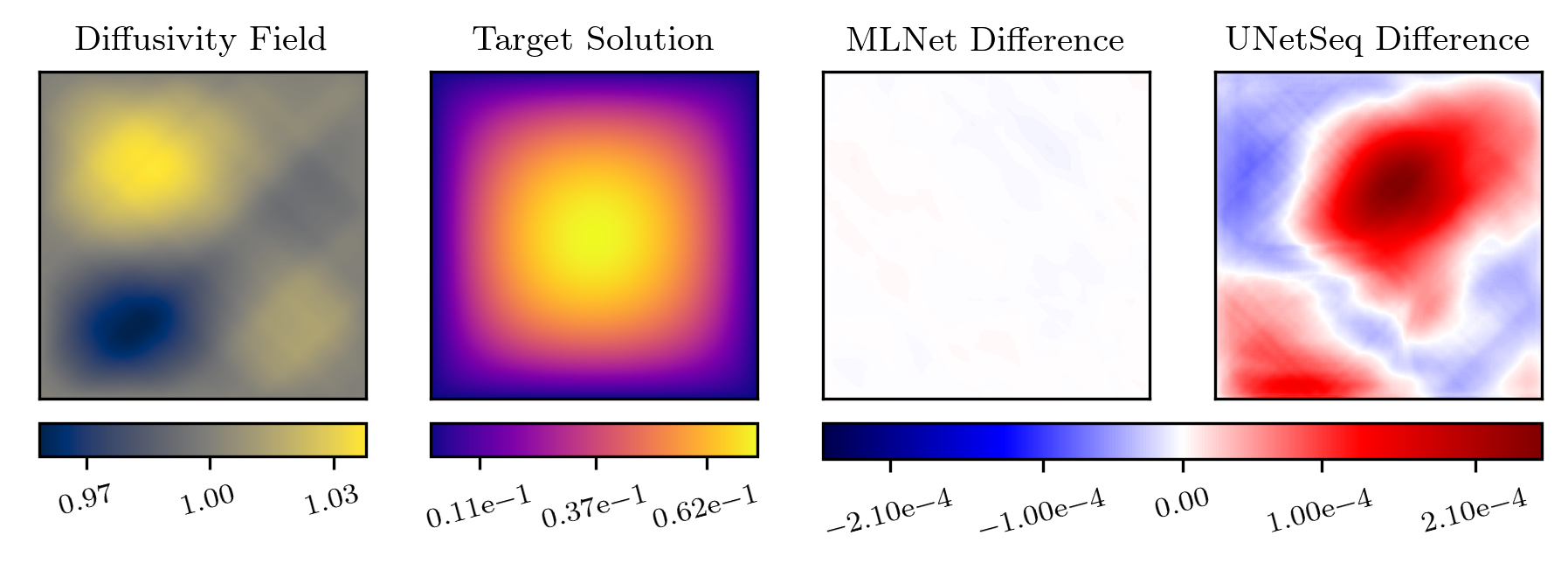}
    \includegraphics[scale=1.0]{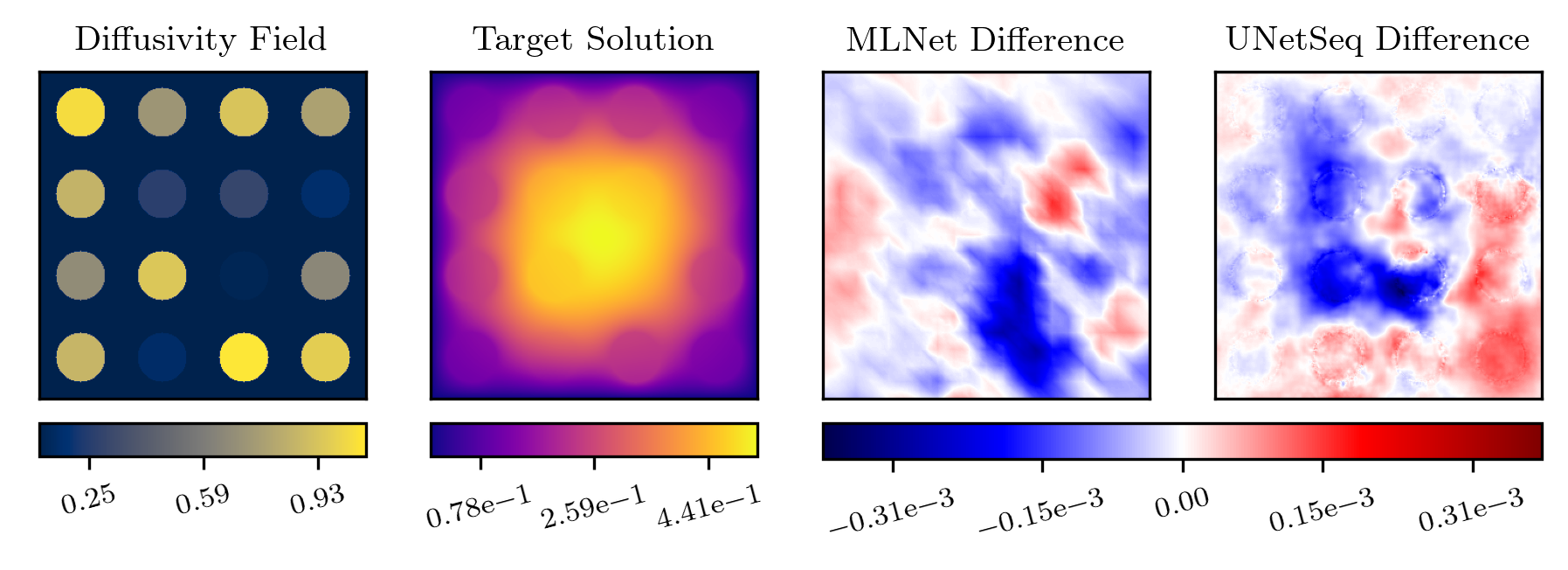}
    \caption{Realizations for different benchmark problems (top to bottom: uniform $p=100$, log-normal $p=100$, cookie fixed). Left-hand side columns show field realizations and respective solutions. Right-hand side pictures the error of $\MLNet$ and $\VNet$ predictions w.r.t.\ exact target solutions.}
    \label{fig:examples_network_differences}
\end{figure}

\subsection{Error metrics}
\label{sec:error metrics}
We examine the mean relative $H^1$ error (MR$H^1$) and mean relative $L^2$ error (MR$L^2$) with respect to the solutions on the finest resolution level $L$ as well as the high-fidelity reference solutions. For parameters $\bfy_1,\ldots,\bfy_N \in \Gamma$, predictions ${u}_1,\ldots,{u}_N \in V_L$, and the solution operators $v_L \colon \Gamma \to V_L$ and $v_{\text{ref}} \colon \Gamma \to H^1(D)$ mapping to the discrete space on the finest grid and the reference grid, respectively, we define
\begin{align*}
    \mathcal{E}_{\text{MR}\star} := \sqrt{\frac{\sum_{i=1}^N \norm{u_i - v_L(\bfy_i)}_\star^2}{\sum_{i=1}^N \norm{v_L(\bfy_i)}_\star^2}}\quad\text{and}\quad \mathcal{E}^{\text{ref}}_{\text{MR}\star} := \sqrt{\frac{\sum_{i=1}^N \norm{u_i - v_{\text{ref}}(\bfy_i)}_\star^2}{\sum_{i=1}^N \norm{v_{\text{ref}}(\bfy_i)}_\star^2}},
\end{align*}
with $\star \in \{H^1, L^2\}$.
\subsection{Results for the test cases}
\label{sec:results}

Tables~\ref{tab:results_mrh1} and~\ref{tab:results_ref_mrh1} depict the $\Err$ and $\ErrRef$ for $\MLNet$ and $\VNet$ for the numerical test cases described above with varying stochastic parameter dimensions. $\Errl$ and $\ErrRefl$ are shown in Tables~\ref{tab:results_mrl2} and \ref{tab:results_ref_mrl2} in Appendix~\ref{sec:add_tables}. A random selection of solutions and NN predictions is also shown in Figure~\ref{fig:examples_network_differences}.

In comparison to previously reported performances of DL-based approaches for these benchmark problems in  \citep{geist2020numerical, furier_nn_operators,lu2021deepxde,grossmann2023can}, we observe an improvement of one to two orders of magnitude with our methodology. 

\citet{geist2020numerical} observed a strong dependence of the performance of their DL-based method on the stochastic parameter dimension. In  contrast, both $\MLNet$ and $\VNet$ perform very consistently with increasing parameter dimensions although the problems become more involved. This is in line with the parameter independent bounds for CNNs derived in Section~\ref{sec:analysis}.

$\mathcal{E}_{\text{MR}H^1}$ is generally significantly lower than $\mathcal{E}^{\text{ref}}_{\text{MR}H^1}$. This illustrates that for these cases the NN approximation error on the finest grid $L=7$ can be several magnitudes lower than the FE approximation error. Conversely, the discrepancy between $\Errl$ and $\ErrRefl$ seen in Tables~\ref{tab:results_mrl2} and \ref{tab:results_ref_mrl2} is far less pronounced.

Comparing $\MLNet$ to $\VNet$, we observe that both models generally exhibit a comparable performance. Lower variances in model accuracy for $\MLNet$ indicate an improved training stability when using a suitable multilevel decomposition.

\begin{table}
    \centering
    \begingroup
    \renewcommand\cellalign{l}
    \setcellgapes{0.2ex}\makegapedcells
    \begin{tabular}{
    l c*{4}{>{\hspace*{0.0mm}}c<{\hspace*{0.0mm}}}}
        \toprule
        \multirow{2}[2]{*}{problem} & \multirow{2}[2]{*}{\makecell{parameter\\dimension $p$}} & \multicolumn{2}{c}{$\Err$} \\
        \cmidrule(lr){3-4}
        \multicolumn{2}{r}{} & $\MLNet$ & $\VNet$ \\
        \midrule
        \multirow{4}{*}{uniform} & $10$ & $2.24\textrm{e}{-4}\: {\pm}\: 9.62\textrm{e}{-5}$ & $9.75\textrm{e}{-5}\: {\pm}\: 2.10\textrm{e}{-5}$ \\
         & $50$ & $2.21\textrm{e}{-4}\: {\pm}\: 2.96\textrm{e}{-5}$ & $2.08\textrm{e}{-3}\: {\pm}\: 2.79\textrm{e}{-3}$ \\
         & $100$ & $2.27\textrm{e}{-4}\: {\pm}\: 2.72\textrm{e}{-5}$ & $8.89\textrm{e}{-5}\: {\pm}\: 1.00\textrm{e}{-5}$ \\
         & $200$ & $2.31\textrm{e}{-4}\: {\pm}\: 3.50\textrm{e}{-5}$ & $2.05\textrm{e}{-3}\: {\pm}\: 2.76\textrm{e}{-3}$ \\
        \hline
        \multirow{4}{*}{log-normal} & $10$ & $2.15\textrm{e}{-4}\: {\pm}\: 6.18\textrm{e}{-5}$ & $2.83\textrm{e}{-3}\: {\pm}\: 3.70\textrm{e}{-3}$ \\
         & $50$ & $9.73\textrm{e}{-4}\: {\pm}\: 1.04\textrm{e}{-3}$ & $3.71\textrm{e}{-3}\: {\pm}\: 4.91\textrm{e}{-3}$ \\
         & $100$ & $2.64\textrm{e}{-4}\: {\pm}\: 1.73\textrm{e}{-5}$ & $2.82\textrm{e}{-3}\: {\pm}\: 4.31\textrm{e}{-3}$ \\
         & $200$ & $3.00\textrm{e}{-4}\: {\pm}\: 1.43\textrm{e}{-5}$ & $1.97\textrm{e}{-4}\: {\pm}\: 4.22\textrm{e}{-5}$ \\
        \hline
        \multirow{2}{*}{\makecell{cookie fixed}} & $16$ & $9.41\textrm{e}{-4}\: {\pm}\: 1.12\textrm{e}{-4}$ & $1.10\textrm{e}{-3}\: {\pm}\: 3.76\textrm{e}{-4}$ \\
         & $64$ & $1.85\textrm{e}{-3}\: {\pm}\: 1.38\textrm{e}{-4}$ & $7.20\textrm{e}{-4}\: {\pm}\: 1.43\textrm{e}{-4}$ \\
        \hline
        \multirow{2}{*}{\makecell{cookie variable}} & $32$ & $4.69\textrm{e}{-3}\: {\pm}\: 1.41\textrm{e}{-3}$ & $3.69\textrm{e}{-3}\: {\pm}\: 4.53\textrm{e}{-4}$ \\
         & $128$ & $5.98\textrm{e}{-3}\: {\pm}\: 1.13\textrm{e}{-4}$ & $3.08\textrm{e}{-3}\: {\pm}\: 5.07\textrm{e}{-4}$ \\
        \bottomrule
    \end{tabular}
    \endgroup
    \caption{$\Err$ for $\MLNet$ and $\VNet$ evaluated on all test cases.}
    \label{tab:results_mrh1}
\end{table}

\begin{table}
    \centering
    \begingroup
    \renewcommand\cellalign{l}
    \setcellgapes{0.2ex}\makegapedcells
    \begin{tabular}{
    l c*{4}{>{\hspace*{0.0mm}}c<{\hspace*{0.0mm}}}}
        \toprule
        \multirow{2}[2]{*}{problem} & \multirow{2}[2]{*}{\makecell{parameter\\dimension $p$}} & \multicolumn{2}{c}{$\ErrRef$} \\
        \cmidrule(lr){3-4}
        \multicolumn{2}{r}{} & $\MLNet$ & $\VNet$ \\
        \midrule
        \multirow{4}{*}{uniform} & $10$ & $5.33\textrm{e}{-3}\: {\pm}\: 4.50\textrm{e}{-6}$ & $5.33\textrm{e}{-3}\: {\pm}\: 7.01\textrm{e}{-7}$ \\
         & $50$ & $5.33\textrm{e}{-3}\: {\pm}\: 1.15\textrm{e}{-6}$ & $6.24\textrm{e}{-3}\: {\pm}\: 1.28\textrm{e}{-3}$ \\
         & $100$ & $5.33\textrm{e}{-3}\: {\pm}\: 1.63\textrm{e}{-6}$ & $5.33\textrm{e}{-3}\: {\pm}\: 8.55\textrm{e}{-7}$ \\
         & $200$ & $5.33\textrm{e}{-3}\: {\pm}\: 1.46\textrm{e}{-6}$ & $6.22\textrm{e}{-3}\: {\pm}\: 1.25\textrm{e}{-3}$ \\
        \hline
        \multirow{4}{*}{log-normal} & $10$ & $5.33\textrm{e}{-3}\: {\pm}\: 2.93\textrm{e}{-6}$ & $6.78\textrm{e}{-3}\: {\pm}\: 2.04\textrm{e}{-3}$ \\
         & $50$ & $5.51\textrm{e}{-3}\: {\pm}\: 2.49\textrm{e}{-4}$ & $7.53\textrm{e}{-3}\: {\pm}\: 3.10\textrm{e}{-3}$ \\
         & $100$ & $5.33\textrm{e}{-3}\: {\pm}\: 5.54\textrm{e}{-7}$ & $6.90\textrm{e}{-3}\: {\pm}\: 2.71\textrm{e}{-3}$ \\
         & $200$ & $5.34\textrm{e}{-3}\: {\pm}\: 3.03\textrm{e}{-6}$ & $5.33\textrm{e}{-3}\: {\pm}\: 1.69\textrm{e}{-6}$ \\
        \hline
        \multirow{2}{*}{\makecell{cookie fixed}} & $16$ & $7.09\textrm{e}{-2}\: {\pm}\: 1.87\textrm{e}{-5}$ & $7.09\textrm{e}{-2}\: {\pm}\: 7.39\textrm{e}{-6}$ \\
         & $64$ & $9.73\textrm{e}{-2}\: {\pm}\: 1.16\textrm{e}{-5}$ & $9.73\textrm{e}{-2}\: {\pm}\: 5.62\textrm{e}{-6}$ \\
        \hline
        \multirow{2}{*}{\makecell{cookie variable}} & $32$ & $7.83\textrm{e}{-2}\: {\pm}\: 3.22\textrm{e}{-4}$ & $7.81\textrm{e}{-2}\: {\pm}\: 2.39\textrm{e}{-4}$ \\
         & $128$ & $1.12\textrm{e}{-1}\: {\pm}\: 2.58\textrm{e}{-4}$ & $1.12\textrm{e}{-1}\: {\pm}\: 2.76\textrm{e}{-5}$ \\
        \bottomrule
    \end{tabular}
    \endgroup
    \caption{$\ErrRef$ for $\MLNet$ and $\VNet$ evaluated on all test cases.}
    \label{tab:results_ref_mrh1}
\end{table}

\subsection{Training with successively fewer samples on finer levels}
\label{sec:sample distribution}

A striking advantage of stochastic multilevel methods is that the majority of sample points can be computed on coarser levels with low effort while only few samples are needed on the finest grid \citep{teckentrup2015,lye_mishra_molinaro_2021,harbrecht2016multilevel,ballani2016multilevel}.
We transfer this concept to $\MLNet$ by exponentially decreasing the number of training samples from level to level. To this end, we train each level of our $\MLNet$ only with a fraction of the dataset. More concretely, we divide the number of samples in each subsequent level by two, i.e for level $\ell=1,\ldots,L$, we use $N_\ell\coloneqq 2^{1-\ell} \times 10^{4}$ training samples.
We observe that alternating between low level samples and samples for which all corrections are known during training is needed for stable optimization. The computational budget of generating a multiscale dataset with 1000 samples on the coarsest level and exponentially decaying number of samples on subsequent levels corresponds to 232 full resolution samples.

Table~\ref{tab:datadecay_table} depicts the errors for two test cases (uniform and log-normal with $p=100$) for $\MLNet$ trained on the decayed multiscale dataset and $\VNet$ trained on the full resolution dataset generated with a comparable compute budget (232 samples). We make two observations: First, training $\MLNet$ with a decaying number of samples per level hardly decreases its performance when compared to the full dataset of 1000 samples from Table~\ref{tab:results_mrl2}. 
Second, $\VNet$ trained on a full-resolution dataset of comparable compute budget significantly reduces performance compared to training $\VNet$ on 1000 samples (Table~\ref{tab:results_mrl2}) and compared to $\MLNet$ on the decayed dataset (Table~\ref{tab:datadecay_table}).

The MR$L^2$ errors depicted in Table~\ref{tab:datadecay_table} illustrate the low error that can be achieved, which is at least one order of magnitude lower than what is reported in other papers, see also Tables~\ref{tab:results_mrl2} and~\ref{tab:results_ref_mrl2} in Appendix \ref{sec:add_tables}.
Note that the observed MR$H^1$ error in our experiments is bounded from below by the FE approximation, i.e. the resolution of the finest mesh. For the multilevel advantage to fully take effect, training on finer grids would be required to decrease the FE approximation errors to be comparable to the smaller NN approximation errors. Hence, using fewer high fidelity training samples is a crucial step to train models which achieve an overall MR$H^1$ accuracy comparable to traditional FE solvers.

We conclude that $\MLNet$ can effectively be trained with fewer samples for finer levels, significantly reducing the overall training and data generation costs and, thus, enabling the application of our model with much finer grids.
\begin{table}
    \centering
    \renewcommand\cellalign{l}
    \setcellgapes{0.2ex}\makegapedcells
    \begin{tabular}{l c c c c}
        \toprule
        \makecell{method} & \makecell[c]{error} & \makecell[c]{dataset} & uniform & log-normal\\
        \midrule
        $\MLNet$ & $\Err$ & \text{decaying} & $2.86\textrm{e}{-4}\: {\pm}\: 2.24\textrm{e}{-5}$ & $4.05\textrm{e}{-4}\: {\pm}\: 2.48\textrm{e}{-5}$\\
        $\VNet$ & $\Err$ & \text{fixed} & $1.14\textrm{e}{-3}\: {\pm}\: 6.47\textrm{e}{-4}$ & $5.92\textrm{e}{-3}\: {\pm}\: 4.66\textrm{e}{-3}$\\
        \midrule
        $\MLNet$ & $\ErrRef$ & \text{decaying} & $5.34\textrm{e}{-3}\: {\pm}\: 1.22\textrm{e}{-6}$ & $5.34\textrm{e}{-3}\: {\pm}\: 1.61\textrm{e}{-6}$\\
        $\VNet$ & $\ErrRef$ & \text{fixed} & $5.49\textrm{e}{-3}\: {\pm}\: 1.59\textrm{e}{-4}$ & $8.53\textrm{e}{-3}\: {\pm}\: 3.53\textrm{e}{-3}$\\
        \midrule
        $\MLNet$ & $\Errl$ & \text{decaying} & $5.75\textrm{e}{-5}\: {\pm}\: 4.43\textrm{e}{-6}$ & $9.39\textrm{e}{-5}\: {\pm}\: 5.42\textrm{e}{-6}$\\
        $\VNet$ & $\Errl$ & \text{fixed} & $5.38\textrm{e}{-4}\: {\pm}\: 3.93\textrm{e}{-4}$ & $3.76\textrm{e}{-3}\: {\pm}\: 3.23\textrm{e}{-3}$\\
        \midrule
        $\MLNet$ & $\ErrRefl$ & \text{decaying} & $6.65\textrm{e}{-5}\: {\pm}\: 2.16\textrm{e}{-6}$ & $9.91\textrm{e}{-5}\: {\pm}\: 4.59\textrm{e}{-6}$\\
        $\VNet$ & $\ErrRefl$ & \text{fixed} & $5.40\textrm{e}{-4}\: {\pm}\: 3.91\textrm{e}{-4}$ & $3.76\textrm{e}{-3}\: {\pm}\: 3.23\textrm{e}{-3}$\\
        \bottomrule
    \end{tabular}
    \caption{All errors for $\MLNet$ and $\VNet$ trained with a reduced dataset size on two problem cases with parameter dimension $p=100$. The $\MLNet$ is trained with the number of training samples halved for each level and $\VNet$ is trained using $232$ samples in total.}
    \label{tab:datadecay_table}
\end{table}

\section{Conclusion}
\label{sec:conclusion}

In this work, we combine concepts from established multilevel algorithms with NNs for efficiently solving challenging high-dimensional parametric PDEs. We provide a theoretical complexity analysis revealing the relation between UNet-like architectures approximate classical multigrid algorithms arbitrarily well. Moreover, we show that the $\MLNet$ architecture presents an advantageous approach for learning the parameter-to-solution operator of the parametric Darcy problems. The performance of our method is illustrated by several numerical benchmark experiments showing a significant improvement over the state-of-the-art of previous DL-based approaches for parametric PDEs. In fact, the shown approximation quality matches the best-in-class techniques such as low-rank least squares methods, stochastic Galerkin approaches, compressed sensing and stochastic collocation.
Additionally, we show that the multilevel architecture allows to use fewer data points for finer corrections during training with negligible impact on practical performance. This enables the extension to much finer resolutions without a prohibitive increase of computational complexity. Finally, we note that while we consider a scalar linear elliptic equation in this work, our methodology can be used for a variety of possibly nonlinear and vector-valued problems. Future research directions could include the analysis of such problems as well as the extension of our method to adaptive grids.

\acks{
The authors would like to thank Reinhold Schneider for fruitful discussions on the topic.
I.\ Gühring acknowledges support from the Research Training Group ``Differential Equation- and Data-driven Models in Life Sciences and Fluid Dynamics: An Interdisciplinary
Research Training Group (DAEDALUS)'' (GRK 2433) funded by the German Research Foundation (DFG) and a post-doctoral scholarship from Technical University of Berlin for ``Deep Learning for (Parametric) PDEs from a Theoretical and Practical Perspective''.
M.\ Eigel acknowledges partial support from DFG SPP 1886 ``Polymorphic uncertainty modelling for the numerical design of structures'' and DFG SPP 2298 ``Theoretical foundations of Deep Learning''.}
We acknowledge the computing facilities and thank the IT support of the Weierstrass Institute.

\vskip 0.2in
\bibliography{Bibliography}

\appendix
\section{Exact \texorpdfstring{$\MLNet$}{ML-Net} architecture and training details}\label{app:implementation}
 In our experiments, we construct $\MLNet$ and $\VNet$ with $L=7$ levels. Convolutional layers use $3{\times}3$ kernels with zero padding. Upsampling ($\uparrow$) and downsampling ($\downarrow$) layers are implemented by $5{\times}5$ transpose-strided and strided convolutions with 2-strides, respectively. We do not employ any normalization layers and use the ReLU activation function throughout all architectures. Apart from up- and downsampling layers, $\UNet$ subunits of both $\MLNet$ and $\VNet$ contain two convolutional layers on each scale. Skip-connections are realized by concatenating the output of preceding layers.

For $\MLNet$, we choose the number of $\UNet$s per level $\mathbf{R}_\ell$, $\ell=1,\ldots,L$, such that each subnetwork $\MLNet_\ell$ has approximately $8{\times} 10^5$ trainable parameters. This number was empirically found to produce satisfactory results and yields $\mathbf{R}=[11, 9, 5, 4, 3, 2, 2]$. To downsample the diffusivity field $\bfkappa$ to the input resolution required by the subnetwork $\MLNet_\ell$, $L - \ell + 1$ strided convolutions followed by ReLU activations are used. We use $32$ channels for convolutional layers of $\MLNet$, except for the coarsest level where $64$ channels are used. For all levels $\ell=2,\ldots,L$ except the coarsest one, the $\MLNet_\ell$ subnetworks output the predictions as four channels at half the resolution and use pixel un-shuffle layers \citep{pixelshuffle} to assemble the full predictions. The reason for this is that fine grid corrections for a dyadic subdivision yield a higher similarity between every second value than neighbouring values in the array.
For $\VNet$, $64$ channels are used for all convolutional layers.

$\MLNet$ and $\VNet$ are trained for $200$ epochs with an initial learning rate of $10^{-3}$ during the first $60$ epochs. The learning rate is then linearly decayed to $2{\times}10^{-5}$ over the next $100$ epochs, where it was held for the rest of the training. Due to memory constraints, the batch sizes were chosen to be $20$ for $\MLNet$ and $16$ for $\VNet$. The Adam optimizer \citep{adam_paper} was used in the standard configuration with parameters $\beta_1=0.99, \beta_2=0.999$, and without weight decay.

\section{Additional tables} \label{sec:add_tables}

The following tables depict relative $L^2$ errors of the considered NN architectures for all presented test cases.

\begin{table}[H]
    \centering
    \begingroup
    \renewcommand\cellalign{l}
    \setcellgapes{0.2ex}\makegapedcells
    \begin{tabular}{
    l c*{4}{>{\hspace*{0.0mm}}c<{\hspace*{0.0mm}}}}
        \toprule
        \multirow{2}[2]{*}{problem case} & \multirow{2}[2]{*}{\makecell{parameter\\dimension $p$}} & \multicolumn{2}{c}{$\Errl$} \\
        \cmidrule(lr){3-4}
        \multicolumn{2}{r}{} & $\MLNet$ & $\VNet$ \\
        \midrule
        \multirow{4}{*}{uniform} & $10$ & $9.28\textrm{e}{-5}\: {\pm}\: 6.92\textrm{e}{-5}$ & $3.72\textrm{e}{-5}\: {\pm}\: 9.18\textrm{e}{-6}$ \\
          & $50$ & $4.88\textrm{e}{-5}\: {\pm}\: 6.82\textrm{e}{-6}$ & $1.21\textrm{e}{-3}\: {\pm}\: 1.66\textrm{e}{-3}$ \\
          & $100$ & $4.90\textrm{e}{-5}\: {\pm}\: 9.56\textrm{e}{-6}$ & $3.21\textrm{e}{-5}\: {\pm}\: 3.06\textrm{e}{-6}$ \\
          & $200$ & $4.81\textrm{e}{-5}\: {\pm}\: 7.38\textrm{e}{-6}$ & $1.17\textrm{e}{-3}\: {\pm}\: 1.61\textrm{e}{-3}$ \\
         \hline
         \multirow{4}{*}{log-normal} & $10$ & $7.44\textrm{e}{-5}\: {\pm}\: 1.75\textrm{e}{-5}$ & $1.66\textrm{e}{-3}\: {\pm}\: 2.22\textrm{e}{-3}$ \\
         & $50$ & $7.46\textrm{e}{-4}\: {\pm}\: 9.61\textrm{e}{-4}$ & $2.30\textrm{e}{-3}\: {\pm}\: 3.13\textrm{e}{-3}$ \\
         & $100$ & $6.54\textrm{e}{-5}\: {\pm}\: 6.71\textrm{e}{-6}$ & $1.72\textrm{e}{-3}\: {\pm}\: 2.80\textrm{e}{-3}$ \\
         & $200$ & $7.35\textrm{e}{-5}\: {\pm}\: 2.82\textrm{e}{-6}$ & $7.94\textrm{e}{-5}\: {\pm}\: 2.14\textrm{e}{-5}$ \\
        \hline
         \multirow{2}{*}{\makecell{cookie fixed}} & $16$ & $3.29\textrm{e}{-4}\: {\pm}\: 3.26\textrm{e}{-5}$ & $2.35\textrm{e}{-4}\: {\pm}\: 1.37\textrm{e}{-5}$ \\
         & $64$ & $5.32\textrm{e}{-4}\: {\pm}\: 1.80\textrm{e}{-5}$ & $1.40\textrm{e}{-4}\: {\pm}\: 4.32\textrm{e}{-5}$ \\
        \hline
         \multirow{2}{*}{\makecell{cookie variable}} & $32$ & $1.33\textrm{e}{-3}\: {\pm}\: 1.05\textrm{e}{-4}$ & $7.68\textrm{e}{-4}\: {\pm}\: 2.45\textrm{e}{-5}$ \\
         & $128$ & $2.01\textrm{e}{-3}\: {\pm}\: 8.14\textrm{e}{-5}$ & $7.14\textrm{e}{-4}\: {\pm}\: 7.74\textrm{e}{-5}$ \\
        \bottomrule
    \end{tabular}
    \endgroup
    \caption{$\Errl$ for $\MLNet$ and $\VNet$ evaluated on all test cases.}
    \label{tab:results_mrl2}
\end{table}

\begin{table}[H]
    \centering
    \begingroup
    \renewcommand\cellalign{l}
    \setcellgapes{0.2ex}\makegapedcells
    \begin{tabular}{
    l c*{4}{>{\hspace*{0.0mm}}c<{\hspace*{0.0mm}}}}
        \toprule
        \multirow{2}[2]{*}{problem case} & \multirow{2}[2]{*}{\makecell{parameter\\dimension $p$}} & \multicolumn{2}{c}{$\ErrRefl$} \\
        \cmidrule(lr){3-4}
        \multicolumn{2}{r}{} & $\MLNet$ & $\VNet$ \\
        \midrule
        \multirow{4}{*}{uniform} & $10$ & $1.05\textrm{e}{-4}\: {\pm}\: 6.75\textrm{e}{-5}$ & $4.77\textrm{e}{-5}\: {\pm}\: 6.80\textrm{e}{-6}$ \\
         & $50$ & $5.86\textrm{e}{-5}\: {\pm}\: 7.02\textrm{e}{-6}$ & $1.22\textrm{e}{-3}\: {\pm}\: 1.66\textrm{e}{-3}$ \\
         & $100$ & $5.50\textrm{e}{-5}\: {\pm}\: 5.07\textrm{e}{-6}$ & $4.19\textrm{e}{-5}\: {\pm}\: 3.41\textrm{e}{-6}$ \\
         & $200$ & $5.67\textrm{e}{-5}\: {\pm}\: 5.78\textrm{e}{-6}$ & $1.18\textrm{e}{-3}\: {\pm}\: 1.60\textrm{e}{-3}$ \\
        \hline
        \multirow{4}{*}{log-normal} & $10$ & $7.76\textrm{e}{-5}\: {\pm}\: 1.29\textrm{e}{-5}$ & $1.66\textrm{e}{-3}\: {\pm}\: 2.22\textrm{e}{-3}$ \\
         & $50$ & $7.59\textrm{e}{-4}\: {\pm}\: 9.68\textrm{e}{-4}$ & $2.30\textrm{e}{-3}\: {\pm}\: 3.13\textrm{e}{-3}$ \\
         & $100$ & $7.06\textrm{e}{-5}\: {\pm}\: 4.38\textrm{e}{-6}$ & $1.72\textrm{e}{-3}\: {\pm}\: 2.80\textrm{e}{-3}$ \\
         & $200$ & $8.31\textrm{e}{-5}\: {\pm}\: 3.91\textrm{e}{-6}$ & $8.69\textrm{e}{-5}\: {\pm}\: 1.97\textrm{e}{-5}$ \\
        \hline
        \multirow{2}{*}{\makecell{cookie fixed}} & $16$ & $6.19\textrm{e}{-3}\: {\pm}\: 6.60\textrm{e}{-5}$ & $6.05\textrm{e}{-3}\: {\pm}\: 3.50\textrm{e}{-5}$ \\
         & $64$ & $9.63\textrm{e}{-3}\: {\pm}\: 4.33\textrm{e}{-5}$ & $9.48\textrm{e}{-3}\: {\pm}\: 2.79\textrm{e}{-5}$ \\
        \hline
        \multirow{2}{*}{\makecell{cookie variable}} & $32$ & $8.81\textrm{e}{-3}\: {\pm}\: 3.41\textrm{e}{-5}$ & $8.49\textrm{e}{-3}\: {\pm}\: 9.10\textrm{e}{-5}$ \\
         & $128$ & $1.64\textrm{e}{-2}\: {\pm}\: 5.36\textrm{e}{-5}$ & $1.62\textrm{e}{-2}\: {\pm}\: 6.70\textrm{e}{-5}$ \\
        \bottomrule
    \end{tabular}
    \endgroup
    \caption{$\ErrRefl$ for $\MLNet$ and $\VNet$ evaluated on all test cases.}
    \label{tab:results_ref_mrl2}
\end{table}

\section{Proofs for the results of Section~\ref{sec:analysis}}\label{app:proofs}
This section is devoted to the proofs of our theoretical analysis in Section~\ref{sec:analysis}. For this, we start with some mathematical notation.

\subsection{Mathematical notation}
For some function $f \colon D \subset \RR^d \to \RR^n$, we denote
$\norm{f}_{L^\infty(D)} := \esssup_{x \in D} \norm{f(x)}_\infty$, where $\norm{\cdot}_\infty$ denotes the maximum norm of a vector. For two tensors $\bfx, \bfy \in \RR^{W\times H}$ with $W, H\in \NN$, we denote their pointwise multiplication by $\bfx \odot \bfy \in \RR^{W\times H}$. Since we consider the FE spaces $V_h$ in the two-dimensional setting on uniformly refined square meshes, FE coefficient vectors $\bfx \in \RR^{\dim V_h}$ can be viewed as two-dimensional arrays. Whenever $x$ is processed by a CNN, we implicitly assume a 2D matrix representation.

The bilinear form in~\eqref{eq:variational darcy} induces the problem related and parameter dependent \emph{energy norm} given by
\begin{equation}
\label{eq:energy_norm}
    \norm{w}_{A_\bfy}^2 := \int_D \kappa(\bfy,x) |\nabla w(x)|^2 \dx x \quad\text{for}\quad w\in V.
\end{equation}
Consider Problem~\ref{def:darcy_discretized}. Under the uniform ellipticity assumption\footnote{which always is satisfied with high probability in our settings}, the energy norm is equivalent to the $H^1$~norm (see e.g.~\cite{cohen2015approximation_long}).
We denote by $c_{H^1}, C_{H^1} > 0$ the grid independent constants such that for all $u \in V$ and $\bfy \in \Gamma$
    \begin{equation}
    \label{eq:norm_inequality}
        c_{H^1} \norm{u}_{A_\bfy} \leq \hnorm{u} \leq C_{H^1} \norm{u}_{A_\bfy}.
    \end{equation}
    Moreover, for a multilevel decomposition up to level $L\in \NN$ and $\hat{v}_1,\ldots,\hat{v}_L$ defined as in~\eqref{eq:correction_solution_operator}, denote by $C_{\textnormal{corr}} > 0$ the constant such that for all $\bfy \in \Gamma$ and $\ell=1,\ldots,L$
    \begin{equation}
    \label{eq:v_hat_bound}
        \hnorm{\hat{v}_\ell(\bfy)} \leq C_{\textnormal{corr}} 2^{-\ell} \norm{f}_\ast.
    \end{equation}
Note that for any $\bfy \in \Gamma$, the solution $v_h(\bfy)$ of Problem~\ref{def:darcy_discretized} satisfies
\begin{equation}
    \norm{v_h(\bfy)}_{A_\bfy} \leq C_{H^1} \norm{f}_\ast.
\end{equation}

To limit excessive mathematical overhead in our proofs, we restrict ourselves to a certain type of uniform square grid specified in the following remark.
\begin{remark}
\label{rmk:uniform_square_mesh}
    For $D=[0,1]^2$, mesh width $h>0$ and $m := \nicefrac{1}{h} \in \NN$, we exclusively consider the uniform square grid comprised of $m^2$ identical squares each subdivided into two triangles as illustrated in Figure~\ref{fig:surrounding_triangles}.
\end{remark}

\subsection{CNNs: terminology and basic properties}
\label{sec:cnns_terminology_and_basic_properties}
In this section, we briefly introduce basic convolutional operations together with their corresponding notation. We mostly focus on the two-dimensional case. However, all definitions and results can easily be extended to arbitrary dimensions. For a more in-depth treatment of the underlying concepts, we refer to~\citep[Chapter~9]{goodfellow2016deep}.

Convolutions used in CNNs deviate in some details from the established mathematical definition of a convolution. In the two-dimensional case, the input consists of a tensor $\bfx_{\text{in}} \in \RR^{\Cin \times \Width_{\text{in}} \times \Height_{\text{in}}}$ with spatial dimensions $\Width_{\text{in}}, \Height_{\text{in}} \in \NN$ and number of channels $\Cin \in \NN$, and the output is given by a tensor $\bfx_{\text{out}} \in \RR^{\Cout \times \Width_{\text{out}} \times \Height_{\text{out}}}$. Here, $\Cout\in \NN$ denotes the number of output channels and $\Width_{\text{out}}, \Height_{\text{out}} \in \NN$ the potentially transformed spatial dimensions. $\bfx_{\text{out}}$ is the result of convolving $\bfx_\text{in}$ with a learnable \emph{convolutional kernel} $\kernel \in \RR^{\Cin \times \Cout \times \Wk \times \Wk}$ with kernel width $\Wk \in \NN$, and the channel-wise addition of a learnable \emph{bias} $B \in \RR^{\Cout}$. We use three different types of convolutions: (i) \emph{vanilla}, denoted by $\bfx_{\text{in}} \star K$, here $\Wout = \Win$ and $\Hout= \Hin$; (ii) \emph{two-strided} denoted by $\bfx_{\text{in}} \convs K$, here $\Wout = \left\lfloor\Win / 2\right\rfloor - 1$ and $\Hout= \left\lfloor \Hin / 2\right\rfloor - 1$; (iii) \emph{two-transpose-strided}, denoted by $\bfx_{\text{in}} \convts K$, here $\Wout \coloneqq 2\Win + 1$ and $\Hout\coloneqq 2\Hin + 1$.
Moreover, we write $M(\Phi) \in \NN$ for the number of trainable parameters and $L(\Phi) \in \NN$ for the number of layers of a (possibly convolutional) NN $\Phi$.

Many previous works have focused on approximating functions by fully connected NNs. With a simple trick, we transfer these approximation results to CNNs. 
\begin{theorem}\label{thm:FCNN_to_CNN}
Let $D\subset \RR^d$ and $f:D \to  \RR$ be some function. Furthermore, let $\veps > 0$ and $\Phi$ be a fully connected NN with $d$-dimensional input and one-dimensional output such that $\norm{f - \Phi}_{L^\infty(D)} \leq \veps$, then there exists a CNN $\Psi$ with 
\begin{enumerate}[label=(\roman*)]
    \item $d$ input channels and one output channel;
    \item the spatial dimension of all convolutional kernels is $1\times 1$;
    \item  the same activation function as $\Phi$;
    \item  $M(\Phi) = M(\Psi)$ and $L(\Phi) = L(\Psi)$;
    \item for all $\Width \in \NN$, we have $$
\norm{\Psi - \hat f}_{L^\infty(D^{\Width \times \Width})}\leq \veps,
$$
where $\hat f: D^{\Width \times \Width} \to \RR^{\Width \times \Width}$ is the component-wise application of $f$.

\end{enumerate} 
\end{theorem}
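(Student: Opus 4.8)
The plan is to exploit the elementary identity that a $1\times 1$ convolution acts independently and identically at each spatial location, where it coincides with a fully connected affine map. Concretely, a vanilla convolution $\bfx \star K$ with kernel $K \in \RR^{\Cin \times \Cout \times 1 \times 1}$ and bias $B \in \RR^{\Cout}$ preserves the spatial dimensions ($\Wout = \Win$, $\Hout = \Hin$) and, at each location $(i,j)$, sends the channel-vector $\bfx_{:,i,j} \in \RR^{\Cin}$ to $W \bfx_{:,i,j} + B \in \RR^{\Cout}$, where $W \in \RR^{\Cout \times \Cin}$ is read off from the channel entries of $K$. This is exactly the action of a dense layer with weights $W$ and bias $B$, and such a kernel carries precisely $\Cin \cdot \Cout$ weights together with $\Cout$ bias entries, matching the dense layer parameter count.

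First I would write $\Phi$ as a composition $\Phi = T_{L(\Phi)} \circ \varrho \circ T_{L(\Phi)-1} \circ \cdots \circ \varrho \circ T_1$ of affine maps $T_k(z) = W_k z + b_k$ interleaved with the componentwise activation $\varrho$. I would then define $\Psi$ layer by layer, replacing each $T_k$ by the $1\times 1$ convolution whose kernel encodes $W_k$ (as above) and whose bias is $b_k$, keeping $\varrho$ applied componentwise across channels and spatial positions. By construction $\Psi$ has $d$ input channels (the input dimension of $\Phi$), a single output channel (its scalar output), kernels of spatial size $1\times 1$, and the same activation, which gives properties (i)--(iii); summing the per-layer parameter correspondence over all layers yields $M(\Psi) = M(\Phi)$ and $L(\Psi) = L(\Phi)$, which is (iv).

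For (v) I would prove by induction on the layer index $k$ that the feature tensor produced by the first $k$ layers of $\Psi$, read off at any fixed spatial location $(i,j)$, equals the vector produced by the first $k$ layers of $\Phi$ applied to the input channel-vector at $(i,j)$. The base case is the raw input, and the inductive step is immediate since both the $1\times 1$ convolution and the activation act locationwise and agree there with their dense counterparts. Hence, for an input tensor $\bfx \in D^{\Width\times\Width}$ whose channel-vector at $(i,j)$ is a point $x_{ij} \in D$, we get $\Psi(\bfx)_{i,j} = \Phi(x_{ij})$ while $\hat f(\bfx)_{i,j} = f(x_{ij})$, so that
\[
\norm{\Psi(\bfx) - \hat f(\bfx)}_\infty = \max_{i,j} \abs{\Phi(x_{ij}) - f(x_{ij})} \leq \norm{\Phi - f}_{L^\infty(D)} \leq \veps .
\]
Taking the supremum over all $\bfx \in D^{\Width\times\Width}$ gives the bound $\norm{\Psi - \hat f}_{L^\infty(D^{\Width\times\Width})} \leq \veps$ for every $\Width \in \NN$.

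The argument is essentially bookkeeping, so I do not anticipate a genuine obstacle; the only points needing care are matching the channel-versus-neuron indices in $K \in \RR^{\Cin \times \Cout \times 1 \times 1}$ so that the convolution reproduces $W_k z + b_k$ rather than its transpose, and fixing the convention that $\varrho$ is applied componentwise over both channels and spatial locations, so the two networks remain in lockstep across all layers.
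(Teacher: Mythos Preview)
Your proposal is correct and follows exactly the approach the paper takes: the paper's proof is the single sentence ``The proof follows directly by using the affine linear transformations in each layer of $\Phi$ in the channel dimension as $1\times 1$ convolutions,'' and your write-up is a careful unpacking of precisely that idea.
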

\begin{proof}
The proof follows directly by using the affine linear transformations in each layer of $\Phi$ in the channel dimension as $1\times 1$ convolutions.
\end{proof}

In the next corollary, Theorem~\ref{thm:FCNN_to_CNN} is used to approximate the pointwise multiplication function of two input tensors by a CNN.

\begin{corollary}\label{cor:CNN_multiplication}
Let $\varrho\in L^{\infty}_{\mathrm{loc}}(\RR)$ such that there exists $x_0\in \RR$ with $\varrho$ is three times continuously differentiable in a neighborhood of some $x_0\in\RR$ and $\varrho''(x_0)\neq 0.$  	
	Let $W\in\NN$, $B> 0$, and $\epsilon \in \epsin$, then there exists a CNN $\apmult$ with activation function $\varrho$, a two-channel input and one-channel output that satisfies the following properties:
	\begin{enumerate}
		\item \label{item:network_approximation}$\norm{\apmult(\mathbf{x},\mathbf{y}) - \mathbf{x}\odot\mathbf{y}}_{L^\infty([-B,B]^{2\times W\times W}, d\bfx d\bfy)}\leq \epsilon$;
		\item \label{item:network_complexity_apmult} $L(\apmult)=2$ and $M(\apmult)\leq 9$;
		\item the spatial dimension of all convolutional kernels is $1\times 1$.
	\end{enumerate}
\end{corollary}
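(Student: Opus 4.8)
The plan is to reduce the statement to a one-dimensional approximation problem and then invoke Theorem~\ref{thm:FCNN_to_CNN}. Concretely, I would first construct a fully connected NN $\Phi \colon \RR^2 \to \RR$ with two layers and at most nine weights that approximates the scalar product map $(s,t) \mapsto st$ uniformly on $[-B,B]^2$ up to accuracy $\epsilon$, and then transfer it to a CNN with $1\times 1$ kernels. The component-wise application in Theorem~\ref{thm:FCNN_to_CNN}~(v) turns scalar multiplication into the pointwise tensor multiplication $\mathbf{x}\odot\mathbf{y}$, which is exactly property~\ref{item:network_approximation}, while items (ii)--(iv) of that theorem transport the layer count, weight count, kernel size and activation function, giving property~\ref{item:network_complexity_apmult} and the $1\times 1$-kernel claim.

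The core of the argument is a second-order finite-difference identity that manufactures multiplication from the curvature of $\varrho$ at $x_0$. Since $\varrho$ is three times continuously differentiable on an open neighbourhood $(x_0-\delta,x_0+\delta)$ of $x_0$ with $\varrho''(x_0)\neq 0$, Taylor's theorem with Lagrange remainder gives, for $|u|\le 2B$ and $\lambda$ small enough that $4\lambda B\le\delta$,
\[
\varrho(x_0+\lambda u)=\varrho(x_0)+\lambda u\,\varrho'(x_0)+\tfrac12\lambda^2 u^2\,\varrho''(x_0)+r(\lambda,u),
\]
with $|r(\lambda,u)|\le \tfrac16\lambda^3|u|^3 M_3$ where $M_3:=\sup_{|\xi-x_0|\le\delta/2}|\varrho'''(\xi)|<\infty$. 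Applying this at $u\in\{s+t,\,s,\,t\}$ and forming the three-point combination, the constant and first-order contributions cancel and the second-order terms leave precisely $\lambda^2\varrho''(x_0)\,st$. Hence
\[
\Phi(s,t):=\frac{\varrho(x_0+\lambda(s+t))-\varrho(x_0+\lambda s)-\varrho(x_0+\lambda t)+\varrho(x_0)}{\lambda^2\varrho''(x_0)}
\]
satisfies $|\Phi(s,t)-st|\le C(B,\varrho)\,\lambda$ uniformly on $[-B,B]^2$, so choosing $\lambda$ of order $\epsilon$ (and below $\delta/(4B)$) yields the desired accuracy.

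It remains to read off $\Phi$ as a two-layer network. Its hidden layer consists of the three units $\varrho(\lambda s+\lambda t+x_0)$, $\varrho(\lambda s+x_0)$, $\varrho(\lambda t+x_0)$, i.e. a first affine map with a $3\times 2$ weight matrix $\bigl(\begin{smallmatrix}\lambda&\lambda\\ \lambda&0\\ 0&\lambda\end{smallmatrix}\bigr)$ and constant bias $x_0$; the output is an affine combination of these three hidden values with coefficients $\pm 1/(\lambda^2\varrho''(x_0))$, the constant $\varrho(x_0)$ being absorbed into the output bias. This uses $L(\Phi)=2$ layers and a weight count of $3\cdot 2+1\cdot 3=9$, matching $M(\apmult)\le 9$. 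Applying Theorem~\ref{thm:FCNN_to_CNN} to $f=\Phi$ (with $d=2$) then produces the claimed CNN $\apmult$.

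The main obstacle I anticipate is bookkeeping rather than conceptual: ensuring the weight/layer budget is met exactly (three hidden units are in fact necessary, since two would leave an uncancelled first-order term in $t$), and controlling the Taylor remainder uniformly over the box while keeping all activation arguments inside the smoothness neighbourhood $(x_0-\delta,x_0+\delta)$. Both are handled by the single smallness condition $4\lambda B\le\delta$ together with the finite bound $M_3$ on $\varrho'''$ near $x_0$.
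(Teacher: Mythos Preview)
Your proposal is correct and follows exactly the paper's route: the paper's proof is a one-line appeal to Theorem~\ref{thm:FCNN_to_CNN} combined with \cite[Corollary~C.3]{guhring2021approx}, and your finite-difference construction of the two-layer scalar-multiplication network is precisely the content of that external corollary. In other words, you have unpacked the cited result rather than given a different argument.
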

\begin{proof}
The proof follows from Theorem~\ref{thm:FCNN_to_CNN} together with \cite[Corollary C.3]{guhring2021approx}.
\end{proof}

\subsection{Approximating isolated V-cycle building blocks}
One of the main intermediate steps to approximate the full multigrid cycle by CNNs is the approximation of the operator $A_\kappa$. The theoretical backbone of this section is the observation that $\ATau_{\kappa} \bfu$ can be approximated by a CNN acting on $\kappa$ (in an integral representation defined in Definition~\ref{def:ATau}) and $\bfu$. We start by defining some basic concepts used for our proofs. For the rest of this section, we set $D = [0, 1]^2$.
\begin{definition}
\label{def:ATau}
    Let $\mathcal{T}$ be a uniform triangulation of $D$ with corresponding conforming P1 FE space $V_h$ with basis $\{\phi_1, \ldots, \phi_{\dim V_h}\}$. Furthermore, let $\kappa \in H_0^1(D)$.
    \begin{enumerate}[label=(\roman*)]
         \item For $i\in \{1,\ldots, \dim V_h\}$ we set $\neigh2d(i)\coloneqq \{j\in \{1,\ldots, \dim V_h\}:\supp \phi_i \cap \supp \phi_j \neq \emptyset \}$.
        \item \label{it:def_tau_triang} For each vertex $i\in \{1,\ldots, \dim V_h\}$ of the triangulation, we enumerate the six adjacent triangles (arbitrarily but in the same order for every $i$) and denote them by $T_i^{(k)}$ with $k=1,\ldots, 6$ (see Figure~\ref{fig:surrounding_triangles} for an illustration).
        \item For $i =1, \ldots, \dim V_h$ and $k=1,\ldots, 6$, we set
        \begin{align*}
        \tmeans(\kappa, \mathcal{T}, k, i) := \int_{T_i^{k}} \kappa \dx x
        \end{align*}
        and use the notation 
        \[
        \tmeans(\kappa, \mathcal{T}, k) \coloneqq [\tmeans(\kappa, \mathcal{T}, k, i)]_{i=1}^{\dim V_h} \in \RR^{\dim V_h} 
        \]
        and 
        \[\tmeans(\kappa, \mathcal{T}) \coloneqq [\tmeans(\kappa, \mathcal{T}, k)]_{k=1}^{6}\in \RR^{6\times\dim V_h}.
        \]
    \end{enumerate}
    
\end{definition}

\begin{figure}
    \centering
    \vspace{-1em}
    \begin{tikzpicture}
    \node at (0, 0) {\includegraphics[scale=0.32]{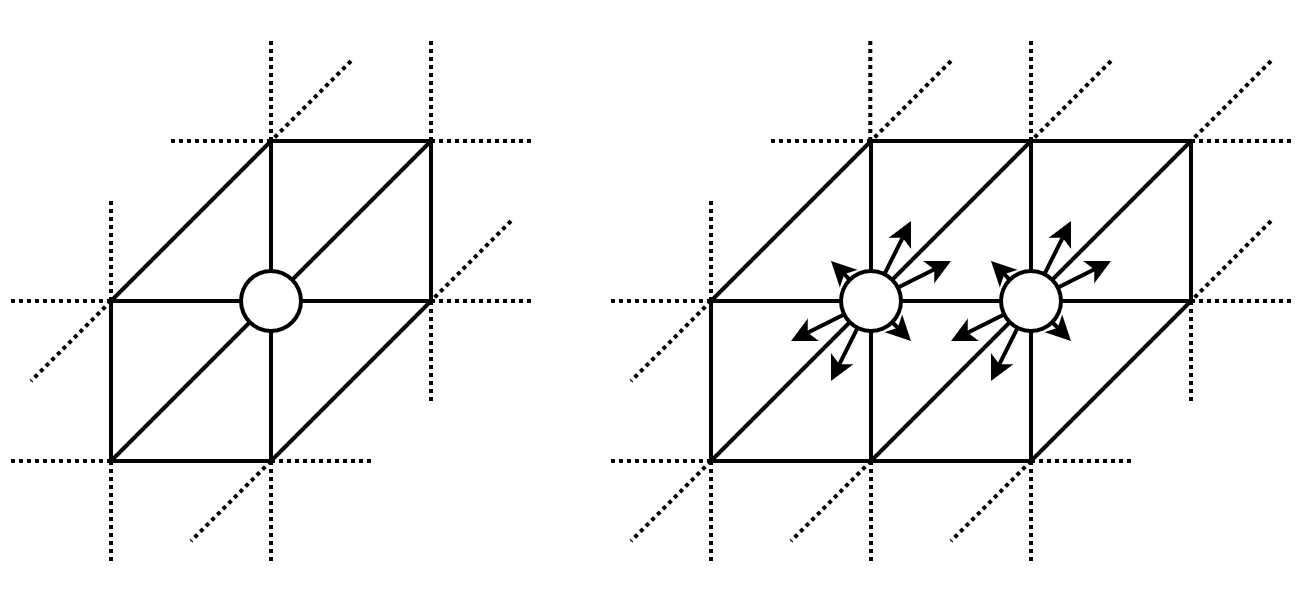}};
    \node at (-4.3, -0.04) {$x_i$};
    \node at (-3, 0.5) {$T_i^{1}$};
    \node at (-3.7, 1.1) {$T_i^{2}$};
    \node at (-4.8, 0.6) {$T_i^{3}$};
    \node at (-5.3, -0.5) {$T_i^{4}$};
    \node at (-4.7, -1.1) {$T_i^{5}$};
    \node at (-3.6, -0.6) {$T_i^{6}$};
    \node at (2.49, -0.04) {$x_i$};
    \node at (4.3, -0.06) {$x_j$};
    \end{tikzpicture}
    \caption{Illustration showing a possible enumeration of the six adjacent triangles of a grid point on the uniform square mesh from Definition~\ref{def:ATau}~\ref{it:def_tau_triang}. Note that this assignment is redundant, as on the right, one can see that $T_i^{(6)} = T_j^{(4)}$ and $T_i^{(1)} = T_j^{(3)}$.}
    \label{fig:surrounding_triangles}
\end{figure}

In the next lemma, we show that the integrals $\tmeans(\kappa, \mathcal{T}, k, i)$ can be computed via a convolution from (a discretized) $\kappa$ and, furthermore, that integrals over a coarse grid can be computed from fine-grid-integrals, again via a convolution.
\begin{lemma}
\label{lmm:kappa_conv}
Let $\Tri_h, \Tri_{2h}$ be nested triangulations of fineness $h$ and $2h$, respectively, with corresponding FE spaces $V_{2h}\subset V_h\subset H_0^1(D)$. Then, the following holds:
\begin{enumerate}[label=(\roman*)]
    \item\label{item:lmm_kappa_conv_i} There exists a convolutional kernel $K \in \RR^{1 \times 6 \times 3 \times 3}$ such that for every $\kappa \in V_h$, we have for $k \in \{1,\ldots,6\}$
    \begin{align*}
     (\bfkappa \convv K)[k] = \tmeans(\kappa, \Tri_h, k).
    \end{align*}
    \item\label{item:lmm_kappa_conv_ii} There exists a convolutional kernel $K \in \RR^{6 \times 6 \times 3 \times 3}$ such that for every $\kappa \in H_0^1(D)$ and $k=1,\ldots,6$, we have
    \begin{align*}
      ([\tmeans(\kappa, \Tri_h, 1),\ldots,\tmeans(\kappa, \Tri_h, 6)] \convv K) [k] = \tmeans(\kappa, \Tri_{2h}, k).
    \end{align*}
\end{enumerate}
\end{lemma}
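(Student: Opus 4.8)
The plan is to prove both parts by explicit construction of the kernels, exploiting two facts: the relevant integrals are \emph{exact} linear functionals of the nodal data, and the geometric stencils are translation invariant on the uniform mesh of Remark~\ref{rmk:uniform_square_mesh}, so they can be realized by a single shared filter.

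For part \ref{item:lmm_kappa_conv_i}, the key observation is that $\kappa\in V_h$ is affine on each triangle, so the centroid quadrature rule is exact: for any triangle $T$ with vertices $v_1,v_2,v_3$,
\[
\int_T \kappa \dx x = \frac{|T|}{3}\big(\kappa(v_1)+\kappa(v_2)+\kappa(v_3)\big) = \frac{h^2}{6}\big(\kappa(v_1)+\kappa(v_2)+\kappa(v_3)\big),
\]
using $|T| = h^2/2$. Since the P1 nodal values coincide with the coefficient vector $\bfkappa$, each $\tmeans(\kappa,\Tri_h,k,i)$ is $h^2/6$ times the sum of $\bfkappa$ at vertex $i$ and at the two further vertices of the adjacent triangle $T_i^k$. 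On the uniform mesh these two vertices sit at fixed offsets relative to $i$ (independent of $i$), and every such offset lies in $\{-1,0,1\}^2$. Hence for each triangle index $k$ I would define the $3\times3$ slice $K[1,k,\cdot,\cdot]$ to carry weight $h^2/6$ at the center and at those two offsets and $0$ elsewhere, giving $(\bfkappa\convv K)[k]=\tmeans(\kappa,\Tri_h,k)$. The only care is near $\dombd$, where triangles touch the boundary: since $V_h\subset H_0^1(D)$ the boundary nodal values vanish, so zero-padding is consistent with the quadrature formula.

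For part \ref{item:lmm_kappa_conv_ii} I would not use piecewise linearity at all, but only additivity of the integral together with the nested dyadic refinement: each coarse triangle of $\Tri_{2h}$ is the disjoint union of four fine triangles of $\Tri_h$, so $\tmeans(\kappa,\Tri_{2h},k,i)$ equals the sum of the four corresponding fine integrals $\tmeans(\kappa,\Tri_h,\cdot,\cdot)$. The construction then reduces to showing that all four of these fine integrals are available within the $3\times3$ neighbourhood of the fine-grid position of the coarse vertex $i$, so that a vanilla $6\times6\times3\times3$ kernel can simply sum them with unit weights placed at the right (channel, offset) pairs. Here the redundant storage of Definition~\ref{def:ATau} is essential: each fine triangle integral is recorded, in a definite channel, at \emph{every} one of its three vertices. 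Although the coarse triangle reaches vertices at offset up to $2$ from its center, one checks that each of its four fine sub-triangles has a vertex at Chebyshev distance at most $1$ from the coarse vertex, and that at that vertex the sub-triangle appears as one of the six standard fine triangles in a fixed channel; reading each contribution off there places every weight inside $\{-1,0,1\}^2$.

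The main obstacle is precisely the bookkeeping in part \ref{item:lmm_kappa_conv_ii}: for each of the six coarse triangle types one must verify reachability within the $3\times3$ stencil and identify the exact channel and offset at which each of the four sub-triangle integrals is stored. I would carry this out explicitly once for a representative triangle (say the one spanned by the neighbours at offsets $(1,0)$ and $(1,1)$, where the four contributions turn out to sit at offsets $(0,0),(1,0),(1,1)$ in the ``E--NE'' channel and at offset $(1,0)$ in the ``NE--N'' channel) and invoke the reflection/rotation symmetry of the mesh for the remaining five. A secondary point I would state clearly is the fine-to-coarse indexing: the vanilla convolution produces values at all fine positions, and the claimed identity with $\tmeans(\kappa,\Tri_{2h},k)$ is to be read at the fine positions corresponding to coarse vertices (the even sublattice), from which the coarse-grid array is recovered by subsampling.
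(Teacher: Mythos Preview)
Your proposal is correct and follows essentially the same route as the paper: for \ref{item:lmm_kappa_conv_i} both arguments use the exact linear quadrature $\int_T\kappa=\tfrac{|T|}{3}\sum\kappa(v_j)$ (the paper phrases this as $\int_T\phi_j=\text{const}$) together with translation invariance of the stencil, and for \ref{item:lmm_kappa_conv_ii} both use additivity over the four fine sub-triangles of a coarse triangle and the redundant per-vertex storage to locate each contribution within the $3\times3$ window---in fact the paper omits the formal verification of \ref{item:lmm_kappa_conv_ii} and only refers to a figure, so your write-up is more complete. One minor correction: the mesh of Remark~\ref{rmk:uniform_square_mesh} has symmetry group generated by translations, the $180^\circ$ rotation, and the reflection $x\leftrightarrow y$, which partitions the six triangle types into the two orbits $\{T^1,T^2,T^4,T^5\}$ and $\{T^3,T^6\}$, so your symmetry reduction still leaves one further representative case to check explicitly.
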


\begin{proof}
    We start by showing (i). For $k=1,\ldots 6$ and $i=1, \ldots, \dim V_h$,
    \begin{equation*}
        \tmeans(\kappa, \Tri, k, i) = \int_{T_i^{k}} \kappa \dx x
        = \sum_{j=1}^{\dim V_h} \bfkappa_j \int_{T_i^{k}} \phi_j^{(h)} \dx x
        = \sum_{\{j : \supp \phi_j^{(h)} \cap T_i^k \neq \emptyset\}} \bfkappa_j \frac{h^2}{3},
    \end{equation*}
    where we used in the last step that $\int_{T_i^{k}} \kappa \dx x = \nicefrac{h^2}{3}$ if $\supp \phi_j^{(h)} \cap T_i^k \neq \emptyset$.
    Clearly, $\supp \phi_j^{(h)} \cap T_i^k=\emptyset$ for all $j\in\{1,\ldots,\dim V_h\}$ with $j \notin \neigh2d(i)$. Furthermore, the position of the relevant neighbors $\{j : \supp \phi_j^{(h)} \cap T_i^k \neq \emptyset\}$ relative to index $i$ is invariant to 2D-translations of $i$. Combining these two observations concludes the proof.

    (ii) Intuitively it is clear that a strided convolution is able to locally compute the mean of all coefficients from the finer triangulation. However, to prove this rigorously would be a tedious exercise. We hence omit the formal proof and refer to Figure~\ref{fig:sub_triangles_enumeration} for an illustration of the neighboring sub-triangles in each grid point.
\end{proof}

\begin{figure}[h]
    \centering
    \begin{tikzpicture}
    \node at (0, 0) {\includegraphics[scale=0.25]{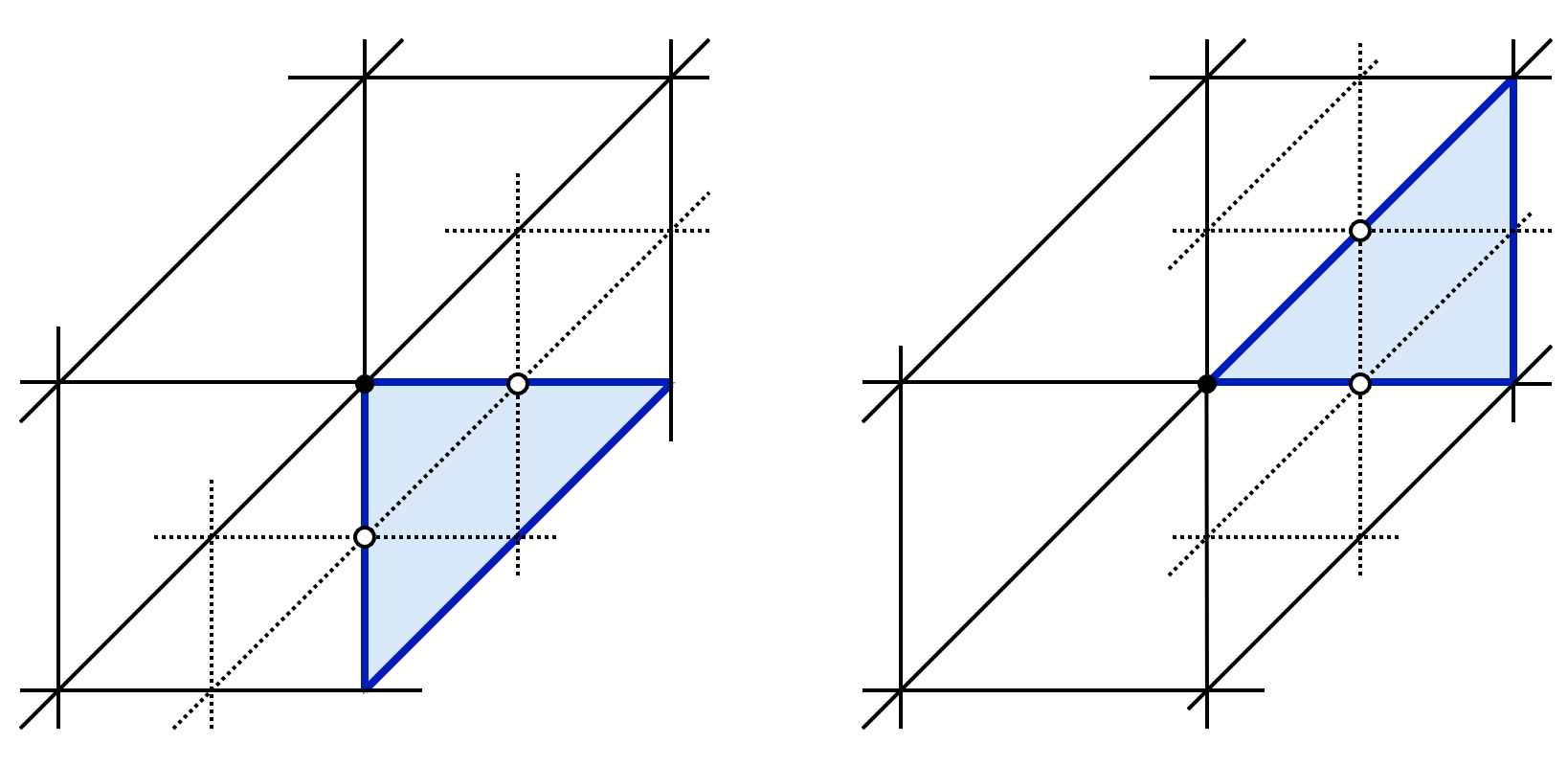}};
    \node at (-4.13, 0.2) {$x_i$};
    \node at (-2.73, 0.25) {$x_{j_1}$};
    \node at (5.7, -0.25) {$x_{j_1}$};
    \node at (5, 1.66) {$x_{j_3}$};
    \node at (-4.17, -1.15) {$x_{j_2}$};
    \node at (3.67, 0.2) {$x_i$};
    
    \node at (-2.5, -2.3) {\colortxt{blue}{$(T_{2h})_i^{(6)}$}};
    \node at (-3.35, -0.3) {\footnotesize{$(T_{h})_i^{(6)}$}};
    \node at (-2.95, -1.1) {\footnotesize{$(T_{h})_{j_2}^{(1)}$}};
    \node at (-3.35, -1.7) {\footnotesize{$(T_{h})_{j_2}^{(6)}$}};
    \node at (-1.95, -0.3) {\footnotesize{$(T_{h})_{j_1}^{(6)}$}};
    
    \node at (5.6, 2.3) {\colortxt{blue}{$(T_{2h})_i^{(1)}$}};
    \node at (6.2, 1.71) {\footnotesize{$(T_{h})_{j_3}^{(1)}$}};
    \node at (5.8, 1.11) {\footnotesize{$(T_{h})_{j_1}^{(2)}$}};
    \node at (6.2, 0.31) {\footnotesize{$(T_{h})_{j_1}^{(1)}$}};
    \node at (4.8, 0.31) {\footnotesize{$(T_{h})_{i}^{(1)}$}};
    \end{tikzpicture}
    \caption{Illustration of the neighboring triangles and sub-triangles of a finer mesh at some grid point. This figure shows how an adjacent triangle (blue) to the grid point $x_i$ in the coarse grid is subdivided. Note that all contained finer triangles can be assigned to a grid point directly adjacent to $x_i$ in the fine mesh. Hence, summing over the diffusivity values associated with the finer triangles can be represented using a convolutional ($3\times3$)-kernel in the fine mesh.}
    \label{fig:sub_triangles_enumeration}
\end{figure}

The next theorem shows that using the integral representation of $\kappa$ from Definition~\ref{def:ATau}, we can now represent $\ATau_{\kappa} \bfu$ by the pointwise multiplication of the $\kappa$ integrals with the output of a convolution applied to $\bfu$.
\begin{theorem}[Representation of $\ATau_{\kappa} \bfu$]
\label{thm:a_kappa_representation}
    For $k=1,\ldots, 6$, there exist convolutional kernels $K^{(k)}\in \RR^{1\times 1 \times 3 \times 3}$, such that for the function 
      \[
        \FAuk:\RR^{7\times \dim V} \to \RR^{\dim V}, \quad (\bfu, \obfkappa{1}, \ldots, \obfkappa{6})\mapsto \sum_{k=1}^6  \overline{\bfkappa}^{(k)} \odot (\bfu \convv K^{(k)}),
        \]
    it holds that $F$ is continuous and for any $\kappa \in H_0^1(D)$, we have
    \[
    \FAuk(\bfu, \tmeans(\kappa, \Tri, 1), \ldots, \tmeans(\kappa, \Tri, 6)) = \ATau_{\kappa}\bfu.
    \]
\end{theorem}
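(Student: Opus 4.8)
The plan is to expand $\ATau_{\kappa}\bfu$ entry by entry, localise the integral to the six triangles surrounding each vertex, and then exploit that P1 gradients are piecewise constant so that the $\kappa$-dependence separates off cleanly into the precomputed integrals $\tmeans(\kappa,\Tri,k,i)$. First I would start from the stiffness-matrix entry
\[
(\ATau_{\kappa}\bfu)_i = \sum_{j=1}^{\dim V_h} \bfu_j \int_D \kappa\, \nabla\phi_j\cdot\nabla\phi_i \dx x .
\]
Since $\nabla\phi_i$ vanishes outside $\supp\phi_i = \bigcup_{k=1}^6 T_i^{(k)}$, the domain of integration reduces to these six adjacent triangles, and summing over only the nonzero contributions gives
\[
(\ATau_{\kappa}\bfu)_i = \sum_{k=1}^6 \int_{T_i^{(k)}} \kappa \Big( \sum_{j} \bfu_j\, \nabla\phi_j\cdot\nabla\phi_i \Big) \dx x .
\]

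The key observation is that on a single triangle $T_i^{(k)}$ only the three basis functions attached to its vertices are nonzero and, because we use P1 elements, all their gradients are constant on that triangle. Hence the bracketed integrand is a constant in $x$ that depends linearly on the three relevant entries of $\bfu$, so each triangle integral factors as that constant times $\int_{T_i^{(k)}}\kappa\dx x = \tmeans(\kappa,\Tri,k,i)$. The linear combination of the $\bfu_j$ has coefficients equal to the reference dot products $(\nabla\phi_j\cdot\nabla\phi_i)|_{T_i^{(k)}}$, which — by the uniformity of the mesh and the consistent enumeration of adjacent triangles in Definition~\ref{def:ATau}\,\ref{it:def_tau_triang} — are independent of $i$ and supported on the $3\times 3$ index neighbourhood $\neigh2d(i)$. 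This translation invariance is exactly what identifies the map $\bfu \mapsto \big[\sum_j \bfu_j (\nabla\phi_j\cdot\nabla\phi_i)|_{T_i^{(k)}}\big]_i$ with a convolution $\bfu\convv K^{(k)}$ for a fixed kernel $K^{(k)}\in\RR^{1\times1\times3\times3}$, whose nine entries I would read off directly from the six reference triangle geometries. Substituting back yields $(\ATau_{\kappa}\bfu)_i = \sum_{k=1}^6 \tmeans(\kappa,\Tri,k,i)\,(\bfu\convv K^{(k)})_i$, which is precisely the $i$-th component of $\FAuk(\bfu,\tmeans(\kappa,\Tri,1),\ldots,\tmeans(\kappa,\Tri,6))$. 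Continuity of $\FAuk$ is then immediate: it is a finite sum of compositions of the (linear) convolutions $\convv$ with the (bilinear) pointwise product $\odot$, hence a polynomial, and therefore a smooth map on $\RR^{7\times\dim V}$.

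The only genuinely delicate point is the bookkeeping at the boundary. Since $V_h\subset H_0^1(D)$, the coefficients attached to boundary nodes vanish and the convolution must be read with zero padding, so that contributions from ``ghost'' vertices outside the grid do not appear. I expect this to be the main (though routine) obstacle: one has to verify that the \emph{same} kernel $K^{(k)}$ reproduces the correct entries for interior, edge, and corner vertices. This follows because the missing neighbours carry zero coefficients and the triangles lying outside $D$ are simply absent from the stencil, so no correction to $K^{(k)}$ is needed. Everything else is a direct consequence of the piecewise-constant-gradient structure of P1 elements, and the six kernels can be tabulated explicitly once the reference gradients have been computed.
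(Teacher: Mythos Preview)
Your proposal is correct and follows essentially the same route as the paper: both expand $(A_\kappa\bfu)_i$ over the six adjacent triangles, use that the P1 gradients are piecewise constant to factor each triangle integral into $\tmeans(\kappa,\Tri,k,i)$ times a constant $(\nabla\phi_j\cdot\nabla\phi_i)|_{T_i^{(k)}}$ (the paper calls this $C_{ijk}$), and then invoke translation invariance of these constants on the uniform mesh to identify the inner sum as a $3\times 3$ convolution. Your additional remarks on continuity and boundary/zero-padding are accurate and in fact match the paper's separate Remark on boundary conditions.
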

\begin{proof}
We start by introducing some notation: Let $i,j \in \{1, \ldots, \dim V_h\}$ and $k\in \{1,\ldots,6\}$ and set $C_{ijk} \in \RR$ as the constant that $\inner{\nabla \phi_i}{\nabla \phi_j}$ attains on $T_i^k$. Note that $C_{ijk}=0$ if $i\notin \neigh2d(j)$.
We now represent each entry of $A_\kappa$ as a sum of multiplications: For $i,j \in \{1,\ldots,\dim V_h\}$, we have
    \begin{equation*}
        (A_{\kappa})_{i j} = \int_{D} \kappa \inner{\nabla \phi_i}{\nabla \phi_j} \dx x
        = \sum_{k=1}^6 \int_{T_i^{k}} \kappa \inner{\nabla \phi_i}{\nabla \phi_j} \dx x
        = \sum_{k=1}^6 \Upsilon(\kappa, \mathcal{T}, k, i) C_{i j k},
    \end{equation*}
where we use Definition \ref{def:ATau} (ii) for the last step. For $j\in \{1,\ldots,\dim V_h\}$, we can now rewrite $(\ATau_{\kappa} \bfu)_j$ as 
    \begin{align*}
        (\ATau_{\kappa} \bfu)_j &=  \sum_{i =1}^{\dim V_h}\bfu_i \sum_{k=1}^6 \tmeans(\kappa, \Tri, k, j) C_{i j k}\\
        &= \sum_{k=1}^6 \tmeans(\kappa, \Tri, k, j) \sum_{i =1}^{\dim V_h}\bfu_i  C_{i j k}\\
        &= \sum_{k=1}^6 \tmeans(\kappa, \Tri, k, j) \sum_{i\in \textrm{Neigh2D}(j)} \bfu_i C_{ij k}.
    \end{align*}
    Here, we use in the first step the definition of $\ATau_{\kappa}$ and in the third step that $\supp \phi_i\cap \supp \phi_j\neq \emptyset$ only for neighboring indices. Since $ C_{ijk}$ only depends on the relative position of $i$ to $j$ (in the 2D sense) and is independent on $j$, the inner sum can be expressed in the vectorized form as a zero-padded convolution with a $3\times 3$ kernel (the number of neighbors) where the values depend on $k$, i.e.,
    \begin{equation*}
    \ATau_{\kappa} \bfu = \sum_{k=1}^6  \tmeans(\kappa, \Tri, k)  \odot (\bfu \convv K^{(k)}) = \FAuk(\bfu, \tmeans(\kappa, \Tri, 1), \ldots, \tmeans(\kappa, \Tri, 6)),
    \end{equation*}
    where $K^{(k)}$ are the respective kernels.
\end{proof}

In the following remark, we elaborate on the treatment of the boundary conditions.
\begin{remark}
    In our setting, we only consider basis functions on inner grid points. Alternatively, one could include the boundary basis functions and constrain their coefficients to zero to enforce homogeneous Dirichlet boundary conditions. Representing the FE operator as a zero-padded convolution on the inner grid vertices naturally realizes homogeneous Dirichlet boundary conditions by implicitly including boundary basis coefficients as zeros through padding.
\end{remark}

Using the representation of $\ATau_{\kappa} \bfu$ from Theorem~\ref{thm:a_kappa_representation} as a convolution followed by a pointwise multiplication, Corollary~\ref{cor:CNN_multiplication} can now be applied to obtain an approximation by a CNN.
\begin{theorem}[Approximation of $\ATau_{\kappa} \bfu$]
    \label{thm:a_kappa_approximation}
    Let $\Tri$ be a uniform triangulation of $D$ with corresponding conforming P1 FE space $V$. Furthermore, let the activation function $\varrho\in L^{\infty}_{\mathrm{loc}}(\RR)$ be such that there exists $x_0\in \RR$ with $\varrho$ is three times continuously differentiable in a neighborhood of some $x_0\in\RR$ and $\varrho''(x_0)\neq 0$. Then, for any $\veps>0$ and $M>0$, there exists a CNN $\Psi_{\veps, M}:\RR^{7\times\dim V_h} \to \RR^{\dim V_h}$ with size independent of $\veps$ and $M$ such that 
    \begin{align*}
        \norm{\Psi_{\veps, M} -F}_{L^\infty([-M,M]^{7\times\dim V_h})} \leq \veps.
    \end{align*}
\end{theorem}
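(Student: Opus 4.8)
The plan is to exploit the structure of $\FAuk$ exhibited in Theorem~\ref{thm:a_kappa_representation}: it is a sum of six terms, each a pointwise product of a \emph{fixed} convolution of $\bfu$ with one of the diffusivity channels $\obfkappa{k}$. The convolutions $\bfu \convv K^{(k)}$ are linear with prescribed $3\times 3$ kernels and can therefore be realised \emph{exactly} by a single convolutional layer. The only nonlinearity is the Hadamard product $\odot$, which I would approximate using the approximate-multiplication CNN $\apmult$ from Corollary~\ref{cor:CNN_multiplication}. Thus I would assemble $\Psi_{\veps,M}$ from three blocks: (i) an exact convolution layer producing the six feature maps $\bfw^{(k)}\defeq \bfu \convv K^{(k)}$ while passing the channels $\obfkappa{1},\ldots,\obfkappa{6}$ through unchanged (via identity kernels); (ii) six parallel copies of $\apmult$, the $k$-th approximating $\obfkappa{k}\odot\bfw^{(k)}$; and (iii) a final linear layer summing the six approximate products.

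First I would bound the intermediate feature maps: since $\abs{\bfu_i}\le M$ on the input domain and the entries of the finitely many fixed kernels $K^{(k)}$ are constants, there is a constant $c\ge 1$ depending only on these kernels with $\norm{\bfw^{(k)}}_{\infty}\le cM\eqdef B$ for every $k$. Consequently, for every input in $[-M,M]^{7\times\dim V_h}$, each pair $(\obfkappa{k},\bfw^{(k)})$ lies pointwise in $[-B,B]^{2}$, and I can invoke Corollary~\ref{cor:CNN_multiplication} with domain radius $B$ and target accuracy $\epsilon'\defeq\min(\veps/6,\tfrac14)$ (the cut at $\tfrac14$ only serves to respect the range $\epsin$ and is harmless, as larger $\veps$ is easier) to obtain $\apmult$ with $\norm{\apmult(\obfkappa{k},\bfw^{(k)}) - \obfkappa{k}\odot\bfw^{(k)}}_{L^\infty}\le \epsilon'$. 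Running the six instances in parallel and routing the matching channels into each is a standard parallelisation of CNNs.

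For the error estimate I would use that blocks (i) and (iii) are exact, so $\Psi_{\veps,M}-\FAuk=\sum_{k=1}^{6}\bigl[\apmult(\obfkappa{k},\bfw^{(k)})-\obfkappa{k}\odot\bfw^{(k)}\bigr]$, and the triangle inequality over the six summands gives
\[
\norm{\Psi_{\veps,M} - \FAuk}_{L^\infty([-M,M]^{7\times\dim V_h})} \le \sum_{k=1}^{6}\norm{\apmult(\obfkappa{k},\bfw^{(k)}) - \obfkappa{k}\odot\bfw^{(k)}}_{L^\infty} \le 6\,\epsilon' \le \veps.
\]
For the size bound I would note that the convolution and summation layers have a fixed number of weights, determined solely by the $K^{(k)}$, while each $\apmult$ has at most $9$ weights and $2$ layers by Corollary~\ref{cor:CNN_multiplication}; hence $M(\Psi_{\veps,M})$ and $L(\Psi_{\veps,M})$ are independent of $\veps$ and $M$.

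The main obstacle is precisely the bounded-domain requirement of the approximate multiplication: $\apmult$ only approximates $\odot$ on a compact box, so I must propagate the input bound $M$ through the exact convolution to obtain the explicit bound $B=cM$ before invoking Corollary~\ref{cor:CNN_multiplication}. The crucial (and slightly subtle) point is that the complexity estimate in that corollary is \emph{uniform} in both the domain radius and the accuracy $\epsilon$; thus enlarging $B$ together with $M$ does not inflate the network, which is exactly what yields size independence from $\veps$ and $M$.
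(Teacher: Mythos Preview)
Your proposal is correct and follows essentially the same three-step approach as the paper: an exact convolutional layer computing the $\bfu\convv K^{(k)}$ while passing through the $\obfkappa{k}$, an application of Corollary~\ref{cor:CNN_multiplication} with accuracy $\veps/6$ for the pointwise products, and a final $1\times 1$ summation layer. You are in fact slightly more explicit than the paper in propagating the input bound $M$ through the fixed kernels to obtain the domain radius $B=cM$ needed to invoke Corollary~\ref{cor:CNN_multiplication}, a point the paper leaves implicit.
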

\begin{proof}
    A three-step procedure leads to the final result:
    \begin{enumerate}[label=(\roman*)]
        \item Clearly, there exists a one-layer CNN that realizes the mapping 
        \[
        [\bfu,  \overline{\bfkappa}^{(1)}, \ldots, \overline{\bfkappa}^{(6)}]\mapsto [\bfu\convv K^{(k)},  \overline{\bfkappa}^{(k)}]_{k=1}^6.
        \]
        \item Corollary~\ref{cor:CNN_multiplication} yields the existence of a two-layer CNN $\tilde \Psi$ with at most $9$ weights and $1 \times 1$ kernels such that 
        \[
        \norm{\tilde \Psi([\bfu\star K^{(k)},  \overline{\bfkappa}^{(k)}]_{k=1}^6) -  [\overline{\bfkappa}^{(k)} \odot (\bfu \convv K^{(k)})]_{k=1}^6}_\infty \leq \frac{\veps}{6}.
        \]
        \item The channel-wise addition can trivially be constructed with a one-layer CNN with a $1 \times 1$ kernel.
    \end{enumerate}
    A concatenation of the above three CNNs concludes the proof.
\end{proof}

Recall that the prolongation and its counterpart, the weighted restriction, are essential operations in the multigrid cycle. The following remark states that these operations can naturally be expressed as convolutions.
\begin{remark}[Weighted restriction and prolongation]
\label{remark:cnn_restriction_prolongation}
    Consider a two-level decomposition using nested conforming P1 FE spaces $V_1 \subset V_2 \subset H_0^1(D)$ on uniform square meshes. Let $P$ be the corresponding prolongation matrix (Definition~\ref{def:prolongation_restriction}). Then, there exist convolutional kernels $K_1, K_2 \in \RR^{1 \times 1 \times 3 \times 3}$ such that for any FE coefficient vector
    \begin{enumerate}[label=(\roman*)]
        \item  $\bfu \in \RR^{\dim V_2}$, we have $\bfu \convs K_1= P^T \bfu$;
     \item $\bfu \in \RR^{\dim V_1}$, we have $\bfu \convts K_2 = P \bfu$.
    \end{enumerate}
\end{remark}

\subsection{Putting it all together}
In this section, we combine the approximation of individual computation steps of the multigrid algorithm by CNNs to approximate an arbitrary number of multigrid cycles by a CNN. For the composition of the individual approximations, we first provide an auxiliary lemma.
\begin{lemma}
\label{lmm:approximation_function_combination}
    Let $n\in \NN$ and $d_1,\ldots,d_{n+1} \in \NN$. For $i= 1,\ldots,n$ let $f_i \colon \RR^{d_i} \to \RR^{d_{i+1}}$ be continuous functions and define $F$ as their concatenation, i.e.,\
    \begin{align*}
        F \colon \RR^{d_1} \to \RR^{d_{n+1}},\quad F \coloneqq f_{n} \circ \ldots \circ f_1.
    \end{align*}
    Then, for every $M, \veps > 0$, there exist $M_1,\ldots,M_{n} > 0$ and $\veps_1,\ldots,\veps_{n} > 0$, such that the following holds true:
    If $\tilde{f}_i \colon \RR^{d_i} \to \RR^{d_{i+1}}$ are functions such that ${\Vert f_i - \tilde f_i \Vert}_{L^\infty([-M_i,M_i]^{d_i})} \leq \veps_i$ for $i=1,\ldots,n$, then
    \begin{align*}
        \norm{F - \tilde{f}_{n} \circ \ldots \circ \tilde{f}_{1}}_{L^\infty([-M,M]^{d_1})} \leq \veps.
    \end{align*}
\end{lemma}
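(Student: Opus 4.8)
The plan is to track how the approximation error propagates through the composition one stage at a time, and to choose the domain radii $M_i$ and the tolerances $\veps_i$ so that two things hold simultaneously: (a) every perturbed intermediate output stays inside the box on which the corresponding $\tilde f_i$ is guaranteed to be $\veps_i$-accurate, and (b) the accumulated error at the final stage is at most $\veps$. Concretely I would write $F_0 := \id$, $F_j := f_j \circ \cdots \circ f_1$ and $\tilde F_j := \tilde f_j \circ \cdots \circ \tilde f_1$ for $j = 1,\ldots,n$, and set $e_j := \norm{F_j - \tilde F_j}_{L^\infty([-M,M]^{d_1})}$, so that $e_0 = 0$ and the goal is $e_n \le \veps$.

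First I would fix the radii $M_i$. Since $K_1 := [-M,M]^{d_1}$ is compact and each $f_i$ is continuous, every image $F_{i-1}(K_1)$ is compact, hence contained in some cube $[-R_i, R_i]^{d_i}$. I set $M_i := R_i + 1$. The extra unit is a buffer: as long as the running error satisfies $e_{i-1} \le 1$, the perturbed value $\tilde F_{i-1}(x)$ lies within distance $1$ of $F_{i-1}(x) \in [-R_i, R_i]^{d_i}$, hence inside $[-M_i, M_i]^{d_i}$, which is exactly the box on which the hypothesis ${\norm{f_i - \tilde f_i}_{L^\infty([-M_i,M_i]^{d_i})} \le \veps_i}$ is available.

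Next I would set up the error recursion. Splitting via the triangle inequality (with $\abs{\cdot}$ denoting the relevant vector norm),
\[
\abs{\tilde F_j(x) - F_j(x)} \le \abs{\tilde f_j(\tilde F_{j-1}(x)) - f_j(\tilde F_{j-1}(x))} + \abs{f_j(\tilde F_{j-1}(x)) - f_j(F_{j-1}(x))},
\]
the first term is $\le \veps_j$ (provided $\tilde F_{j-1}(x)$ lies in the domain, which is guaranteed by $e_{j-1} \le 1$), while the second is controlled by the modulus of uniform continuity $\omega_{f_j}$ of $f_j$ on the compact cube $[-M_j, M_j]^{d_j}$, yielding $e_j \le \veps_j + \omega_{f_j}(e_{j-1})$ whenever $e_{j-1} \le 1$. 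I would then introduce intermediate targets $\delta_n := \min(\veps, 1)$ and, by backward induction for $j = n, \ldots, 1$, pick $\delta_{j-1} \in (0,1]$ small enough (using uniform continuity of $f_j$) that $\omega_{f_j}(\delta_{j-1}) \le \delta_j / 2$, and set $\veps_j := \delta_j / 2$. A forward induction then gives $e_j \le \delta_j \le 1$ for all $j$: the base case $e_0 = 0$ is clear, and the inductive step yields $e_j \le \veps_j + \omega_{f_j}(e_{j-1}) \le \delta_j/2 + \delta_j/2 = \delta_j$. In particular $e_n \le \delta_n \le \veps$, as desired.

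The main obstacle is the coupling between the two families of constants: the tolerances $\veps_i$ are only meaningful on the boxes $[-M_i, M_i]^{d_i}$, yet whether $\tilde F_{i-1}(x)$ actually lands in that box depends on the errors accrued at earlier stages. The buffer $M_i = R_i + 1$ together with the invariant $\delta_j \le 1$ is precisely what breaks this apparent circularity and lets the forward induction close. Everything else — compactness of continuous images of compacta and uniform continuity on compacta — is standard and requires no delicate estimation.
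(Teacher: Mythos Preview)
Your argument is correct and is essentially the same as the paper's proof: both split via the triangle inequality into a ``wrong map at the right point'' term controlled by $\veps_j$ and a ``right map at perturbed point'' term controlled by uniform continuity of $f_j$ on a compact box, with the error budget halved at each stage. The only difference is presentational: the paper packages the construction as an induction on $n$ (peeling off the last map $f_n$, setting $\veps_n=\veps/2$, and invoking the induction hypothesis with tolerance $\delta$ coming from the uniform continuity of $f_n$), whereas you unroll this into an explicit backward pass for the targets $\delta_j$ followed by a forward verification---the same mechanism, written iteratively rather than recursively.
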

\begin{proof}
    We prove the statement by induction over $n \in \NN$. For $n = 1$, the statement is clear.
    Define the function $g \colon \RR^{d_1} \to \RR^{d_{n}}$ for $n >1$ by
    \begin{align*}
        g := f_{n-1}\circ \ldots \circ f_1.
    \end{align*}
    We set $\veps_n\coloneqq \veps /2$ and $M_n \coloneqq \norm{g}_{L^\infty([-M,M]^{d_1})} + \veps < \infty$ (since $g$ is continuous). As $f_n$ is uniformly continuous on compact sets, there exists $0< \delta \leq \veps$ such that for all $z, \tilde{z} \in [-M_n, M_n]^{d_n}$ with $\norm{z - \tilde{z}}_\infty \leq \delta$ we have $\norm{f_n(z) - f_n(\tilde{z})}_\infty \leq \frac{\veps}{2}$.

    Let now $M_1,\ldots,M_{n-1}$ and $\veps_1,\ldots, \veps_{n-1}$ be given by the induction assumption (applied with $M=M$ and $\veps=\delta$) and let $\tilde{f}_i\colon \RR^{d_i} \to \RR^{d_{i+1}}$ be corresponding approximations for $i=1,\ldots, n-1$. Then (again by the induction assumption), it holds that
    \begin{align*}
        \norm{g - \tilde{f}_{n-1} \circ \ldots \circ \tilde{f}_{1}}_{L^\infty([-M,M]^{d_1})} \leq \delta.
    \end{align*}
    Setting $\tilde{g} := \tilde{f}_{n-1} \circ \ldots \circ \tilde{f}_1$, we get $\norm{\tilde{g}}_{L^\infty([-M,M]^{d_1})} \leq \norm{g}_{L^\infty([-M,M]^{d_1})} + \delta \leq M_n$.
    The proof is concluded by deriving that
    \begin{align*}
        &\norm{F - (\tilde{f}_{n} \circ \ldots \circ \tilde{f}_{1})}_{L^\infty([-M,M]^{d_1})}\\
        &\leq \norm{(f_n \circ g) - (f_n \circ \tilde{g})}_{L^\infty([-M,M]^{d_1})} + \norm{(f_n \circ \tilde{g}) - (\tilde{f}_{n} \circ \tilde{g})}_{L^\infty([-M,M]^{d_1})}\\
        &\leq \frac{\veps}{2} + \frac{\veps}{2} = \veps.
    \end{align*}
\end{proof}

With the preceding preparations, we are now ready to prove our main results. We start with the approximation of the multigrid algorithm by a CNN.
\begin{proofof}{Proof of Theorem \ref{thm:cnn_multigrid}}
\label{prf:cnn_multigrid}
    The overall proof strategy consists of decomposing the function $\mg_{k_0, k}^m:\RR^{3\times\dim V_{L}}\to \RR^{\dim V_{L}}$, where $V_L = V_h$, into a concatenation of a finite number of continuous functions that can either be implemented by a CNN or arbitrarily well approximated  by a CNN. An application of Lemma \ref{lmm:approximation_function_combination} then yields that the concatenation of these CNNs (which is again a CNN) approximates $\mg_{k_0, k}^m$. We make the following definitions:
    \begin{enumerate}[label=(\roman*)]
        \item \textbf{Integrating the diffusion coefficient.} Let $K \in \RR^{1 \times 6 \times 3 \times 3}$ be the convolutional kernel from Lemma~\ref{lmm:kappa_conv}\ref{item:lmm_kappa_conv_i}. For the function
        \begin{align*}
            f_{\text{in}} \colon \RR^{3\times\dim V_{L}} \to \RR^{8\times\dim V_{L}},\quad \begin{pmatrix}
            \bfu\\
            \bfkappa\\
            \bff
            \end{pmatrix} \mapsto
            \begin{pmatrix}
            \bfu\\
             \bfkappa \convv K\\
            \bff
            \end{pmatrix},
        \end{align*}
         it then follows from Lemma~\ref{lmm:kappa_conv}\ref{item:lmm_kappa_conv_i} that $f_{\text{in}}([\bfu, \bfkappa,\bff]) = [\bfu, \tmeans(\kappa_h, \Tri^{L}), \bff]$.
        \item \textbf{Smoothing iterations.} For $\ell=1,\ldots,L$ let $\FAuk_\ell:\RR^{7\times \dim V_\ell} \to \RR^{\dim V_\ell}$ be defined as in Theorem~\ref{thm:a_kappa_representation}. Then we define
        \begin{align*}
            f^{\ell}_{\text{sm}} \colon \RR^{8\times\dim V_{\ell}} \to \RR^{8\times\dim V_{\ell}},\quad
            \begin{pmatrix}
            \bfu\\
            \overline{\bfkappa}^{(1)}\\ 
            \vdots\\
            \overline{\bfkappa}^{(6)}\\
            \bff
            \end{pmatrix}\mapsto
            \begin{pmatrix}
            \bfu + \omega (\bff -\FAuk_\ell(\bfu, \obfkappa{1}, \ldots, \obfkappa{6}))\\
           \overline{\bfkappa}^{(1)}\\ 
            \vdots\\
            \overline{\bfkappa}^{(6)}\\
            \bff
            \end{pmatrix}.
        \end{align*}
        It follows from Theorem~\ref{thm:a_kappa_representation} that 
        \[
        f^{\ell}_{\text{sm}}([\bfu,  \tmeans(\kappa_h, \Tri^{\ell}),\bff]) = [\bfu + \omega(\bff - A_{\kappa_h}^{\ell} \bfu), \tmeans(\kappa_h, \Tri^{\ell}) ,\bff].
        \]

        \item \textbf{Restricted residual.} Similar as in the previous step, we define
        \begin{align*}
            f^{\ell}_{\text{resi}} \colon \RR^{8\times\dim V_{\ell}} &\to \RR^{9\times\dim V_{\ell}},\quad
            \begin{pmatrix}
            \bfu\\
            \overline{\bfkappa}^{(1)}\\ 
            \vdots\\
            \overline{\bfkappa}^{(6)}\\
            \bff
            \end{pmatrix}\mapsto
            \begin{pmatrix}
            \bfu\\
            \bff - \FAuk_\ell(\bfu, \obfkappa{1}, \ldots, \obfkappa{6})\\
           \overline{\bfkappa}^{(1)}\\ 
            \vdots\\
            \overline{\bfkappa}^{(6)}\\
            \bff
            \end{pmatrix},
        \end{align*}
and get from Theorem~\ref{thm:a_kappa_representation} that 
\[
  f^{\ell}_{\text{resi}}([\bfu,  \tmeans(\kappa_h, \Tri^{\ell}),\bff]) = [\bfu, \bff - A_{\kappa_h}^{\ell} \bfu, \tmeans(\kappa_h, \Tri^{\ell}) ,\bff].
\]
Let now $[K_1, \ldots, K_6] \in \RR^{6 \times 6 \times 3 \times 3}$ be the convolutional kernel from Lemma~\ref{lmm:kappa_conv}\ref{item:lmm_kappa_conv_ii} and $P_{\ell-1} \in \RR^{\dim V_{\ell} \times \dim V_{\ell-1}}$  the prolongation matrix from Definition \ref{def:prolongation_restriction} (where its transpose acts as the weighted restriction), then we define
\begin{align*}
            f^{\ell}_{\text{rest}} \colon \RR^{9\times\dim V_{\ell}} &\to \RR^{8\times\dim V_{\ell}} \times \RR^{8\times\dim V_{\ell-1}},\quad
            \begin{pmatrix}
            \bfu\\
            \bfr\\
            \overline{\bfkappa}^{(1)}\\ 
            \vdots\\
            \overline{\bfkappa}^{(6)}\\
            \bff
            \end{pmatrix}\mapsto
            \begin{bmatrix}
            \begin{pmatrix}
                \bfu\\
                \overline{\bfkappa}^{(1)}\\ 
            \vdots\\
            \overline{\bfkappa}^{(6)}\\
                \bff
                \end{pmatrix}\\
                \\
                \begin{pmatrix}
                \mathbf{0}\\
                 \overline{\bfkappa}^{(1)} \convv K_1\\ 
            \vdots\\
             \overline{\bfkappa}^{(6)} \convv K_6\\[0.5em]
                P_{\ell-1}^T \bfr
            \end{pmatrix}
            \end{bmatrix}.
        \end{align*}
        Note that $\tmeans(\kappa_h, \Tri^{\ell}, k) \star K_k =  \tmeans(\kappa_h, \Tri^{\ell-1},k)$ for $k=1,\ldots,6$.
        \item \textbf{Add coarse grid correction to fine grid.} 
        As above, let $P_{\ell-1} \in \RR^{\dim V_{\ell} \times \dim V_{\ell-1}}$ be the prolongation matrix from Definition \ref{def:prolongation_restriction}. We define the function
        \begin{align*}
            f^{\ell}_{\text{prol}} \colon \RR^{8\times\dim V_{\ell}} \times \RR^{8\times\dim V_{\ell - 1}} \to \RR^{8\times\dim V_{\ell}},\quad
            \begin{bmatrix}
            \begin{pmatrix}
                \bfu\\
                \overline{\bfkappa}^{(1)}\\ 
            \vdots\\
            \overline{\bfkappa}^{(6)}\\
                \bff
            \end{pmatrix}\\
            \\
            \begin{pmatrix}
                \bfe\\
               \vdots
            \end{pmatrix}
            \end{bmatrix}\mapsto
            \begin{pmatrix}
                \bfu + P_{\ell-1} \bfe\\
               \overline{\bfkappa}^{(1)}\\ 
            \vdots\\
            \overline{\bfkappa}^{(6)}\\
                \bff
            \end{pmatrix}.
        \end{align*}
              
        \item \textbf{Return solution.} We define
        \begin{align*}
            f_{\text{out}} \colon \RR^{8\cdot\dim V_{\ell}}  \to \RR^{\dim V_{\ell}},\quad
            \begin{pmatrix}
                \bfu\\
                \overline{\bfkappa}^{(1)}\\ 
            \vdots\\
            \overline{\bfkappa}^{(6)}\\
                \bff
            \end{pmatrix}\mapsto
            \bfu.
        \end{align*}
    \end{enumerate}
    We assemble a single V-cycle $\text{VC}_{k_0, k}^\ell$ on level $\ell=1,\ldots,L$ as follows
    \begin{align*}
        \text{VC}_{k_0, k}^\ell &:= \left(\bigcirc_{i=1}^k f^{\ell}_{sm}\right) \circ f^{\ell}_{\text{prol}} \circ (\text{Id}, \text{VC}_{k_0, k}^{{\ell-1}}) \circ f^{\ell}_{\text{rest}} \circ f^{\ell}_{\text{resi}} \circ \left(\bigcirc_{i=1}^k f^{\ell}_{sm} \right),\\
        \text{VC}_{k_0, k}^1 &:= \bigcirc_{i=1}^{k_0} f^{1}_{sm}.
    \end{align*}
    Finally, $\mg_{k_0,k}^m$ is given by:
    \begin{align*}
        \text{MG}_{k_0,k}^m = f_{\text{out}} \circ \left( \bigcirc_{i=1}^m\text{VC}_{k_0, k}^L \right) \circ f_{\text{in}}.
    \end{align*}

It is easy to see that the above defined functions are continuous and, thus, Lemma~\ref{lmm:approximation_function_combination} dictates the accuracy with which to approximate each part of the multigrid algorithm ($f^\ell_{\text{sm}}$, $f^{\ell}_{\text{prol}}$, $f^{\ell}_{\text{rest}}$, $f^{\ell}_{\text{resi}}$, $f^{L}_{\text{in}}$, $f^{L}_{\text{out}}$) on each level $\ell=1,\ldots,L$ to yield a final approximation accuracy of $\delta > 0$ for inputs bounded by $M>0$.\par

On each level $\ell=1,\ldots,L$, we derive the approximations fulfilling these requirements for $f^\ell_{\text{sm}}$ and $f^{\ell}_{\text{resi}}$ using Theorem~\ref{thm:a_kappa_approximation}, $f^{\ell}_{\text{prol}}$ and $f^{\ell}_{\text{rest}}$ as in Remark~\ref{remark:cnn_restriction_prolongation}, and $f^{\ell}_{\text{rest}}$ using Lemma~\ref{lmm:kappa_conv}. The approximation of $f^{L}_{\text{in}}$ is derived using Lemma~\ref{lmm:kappa_conv} and $f^{L}_{\text{out}}$ is trivially represented using a single convolution. Concatenating the individual CNNs yields a CNN $\Psi$ with an architecture resembling multiple concatenated UNets, such that
\begin{align*}
    \norm{\Psi - \text{MG}_{k_0,k}^m}_{L^\infty([-M, M]^{3 \times n})} \leq \delta.
\end{align*}
The norm equivalence of $\norm{\cdot}_{L^\infty([-M, M]^{3 \times n})}$ and $\hnorm{\cdot}$ in the finite dimensional setting directly implies inequality (i) with a suitable choice of $\delta$. The parameter bound (ii) follows from the construction of $\Psi$.
\end{proofof}

\begin{proofof}{Proof of Corollary~\ref{crl:conv_multigrid_repr}}
\label{prf:conv_multigrid_repr}
    Fix the damping parameter $\omega>0$ and $k \in \NN$ such that $\mu(k) < 1$ and let $m\in \NN$ be chosen as small as possible such that
    \begin{align*}
        m \geq \log\left(\frac{1}{\mu(k)}\right)^{-1} \log \left(\frac{2 C_{H^1}^2}{\veps}\right).
    \end{align*}
    Let $k_0 \in \NN$ be given such that the contraction factor of $k_0$ damped Richardson iterations on the coarsest grid is smaller than $\mu(k)$.
    Let $\Psi$ be the CNN from Theorem~\ref{thm:cnn_multigrid} setting $M=\max\left\{\sup_{\bfy \in \Gamma} \norm{\bfkappa_\bfy}_\infty, \norm{\bff}_\infty \right\}$ such that for all $\bfkappa, \bff \in [-M, M]^{n^2}$
    \begin{align*}
        \hnorm{\Psi(\mathbf{0}, \bfkappa, \bff) - \mg_{k_0,k}^m(\mathbf{0}, \bfkappa, \bff)} \leq \frac{1}{2} \veps \norm{f}_\ast.
    \end{align*}
    Using the contraction property of the multigrid algorithm from Theorem~\ref{thm:hackbusch_thm}, we get
    \begin{align*}
        \norm{\mg_{k_0,k}^m(\mathbf{0}, \bfkappa_\bfy, \bff) - \bfv_h(\bfy)}_{A_\bfy} &\leq \mu(k)^m \norm{\bfv_h(\bfy)}_{A_\bfy}\\
        &\leq \frac{1}{2} \veps C_{H^1}^{-1} \norm{f}_\ast.
    \end{align*}
    This yields
    \begin{align*}
        &\hnorm{\Psi(\mathbf{0}, \bfkappa_\bfy, \bff) - \bfv_h(\bfy)}\\
        &\leq \hnorm{\Psi(\mathbf{0}, \bfkappa_\bfy, \bff) - \mg_{k_0,k}^m(\mathbf{0}, \bfkappa_\bfy, \bff)} + \hnorm{\mg_{k_0,k}^m(\mathbf{0}, \bfkappa_\bfy, \bff) - \bfv_h(\bfy)}\\
        &\leq \frac{1}{2} \veps \norm{f}_\ast +  C_{H^1} \norm{\mg_{k_0,k}^m(\mathbf{0}, \bfkappa_\bfy, \bff) - \bfv_h(\bfy)}_{A_\bfy}\\
        &\leq \veps \norm{f}_\ast.
    \end{align*}
    Assuming $\veps \leq 1$, there is a constant $C>0$ such that $m \leq C \log\left(\frac{1}{\veps}\right)$. Together with the parameter count from Theorem~\ref{thm:cnn_multigrid}, this concludes the proof.
\end{proofof}

\begin{proofof}{Proof of Corollary~\ref{crl:fnet_repr_theorem}}
\label{prf:fnet_repr_theorem}
This proof is structured as follows. We consider a multilevel structure of V-Cycles to approximate the fine grid solution and derive error estimates for this approximation. Similarly to Corollary~\ref{crl:conv_multigrid_repr}, we show how a CNN of sufficient size is able to represent this algorithm and translate the error estimates to the network approximation.\par
To this end, we introduce some constants and fix the number of V-Cycle iterations done on each grid level:
Let $c := \max\{C_{H^1}, C_{\text{corr}}\}$. Fix $k \in \NN$ such that $\mu(k) < 1$ and set $m_1,\ldots,m_L \in \NN$ such that
\begin{align*}
    m_L &\geq \log\left(\frac{1}{\mu(k)}\right)^{-1} \left(\log\left(\frac{2 c^2 L}{\veps} \right) - L \log(2) \right),\\
    m_\ell &\geq \log\left(\frac{1}{\mu(k)}\right)^{-1} \log(2),\text{ for } \ell =1,\ldots,L-1.
\end{align*}
For any $\bfy \in \Gamma$, we denote by $A_\bfy^\ell \in \RR^{\dim V_\ell \times \dim V_\ell}$ be the discretized operator~\eqref{def:matrx_equation} and $\bff_\ell$ the right hand side from Problem~\ref{def:darcy_discretized} on level $\ell=1,\ldots,L$ such that $A_\bfy^\ell \bfv_\ell(\bfy) = \bff_\ell$. Moreover, for ease of notation, we adapt the functions introduced in the proof of Theorem~\ref{thm:cnn_multigrid} and set
\begin{enumerate}[label=(\roman*)]
    \item for each level $\ell \in 1,\ldots, L$, $\bfkappa_\bfy^\ell \in \RR^{6 \times \dim V_\ell}$ as the six channel tensor
    \begin{align*}
        \overline{\bfkappa}_\bfy^\ell := \left[\Upsilon(\kappa_\bfy, \mathcal{T}^\ell,1),\ldots,\Upsilon(\kappa_\bfy, \mathcal{T}^\ell,6)\right].
    \end{align*}

    \item the function $\widetilde{\mg}_{k_0, k}^{m_\ell} \colon \RR^{8 \times \dim V} \to \RR^{\dim V}$ as
    \begin{align*}
        \widetilde{\mg}_{k_0, k}^{m_\ell} := f_{\text{out}}^\ell \circ \left( \bigcirc_{i=1}^{m_\ell}\text{VC}_{k_0, k}^\ell\right),
    \end{align*}
    such that $\widetilde{\mg}_{k_0, k}^{m_\ell} \circ f_{\text{in}}^\ell = \bigcirc_{i=1}^{m_\ell} \mg_{k_0, k}^{m_\ell}$.
\end{enumerate}
We outline the following multilevel procedure for solving Problem~\ref{def:darcy_discretized} using multigrid with $\bfy \in \Gamma$ and $\bfkappa_\bfy \in \RR^{\dim V_L}$ given as input:
\begin{itemize}
    \item \textbf{$\bfkappa_\bfy$ subsampling.} Using $\bfkappa_\bfy$, compute $\overline{\bfkappa}_\bfy^L$ as outlined in Lemma~\ref{lmm:kappa_conv}~(i). Subsequently, compute $\overline{\bfkappa}_\bfy^\ell$ for each level $\ell=2,\ldots,L-1$ as shown in Lemma~\ref{lmm:kappa_conv}~(ii).

    \item $\boldsymbol{\ell=1}.$ Solve for $\bfv_1(\bfy)$ using $m_1$ iterations of the multigrid cycle. Let
    $$\tilde{\bfv}_1(\bfy) := \widetilde{\mg}_{k_0, k}^{m_1}(0, \overline{\bfkappa}_\bfy^1, \bff).$$
    
    \item $\boldsymbol{\ell=2,\ldots,L}.$ While $\hat{\bfv}_\ell(\bfy)$ is the true correction with respect to the Galerkin approximation $\bfv_{\ell-1}(\bfy)$, we introduce $\check{\bfv}_\ell(\bfy)$ as the correction with respect to $\tilde{\bfv}_{\ell -1}(\bfy)$. To this end, set
    \begin{align*}
        \check{\bfv}_\ell(\bfy) &:= \bfv_\ell(\bfy) - P_\ell \tilde{\bfv}_{\ell-1}(\bfy),\\
        \tilde{\bff}_\ell &:= \bff_{\ell} - A_\bfy^{\ell} P_\ell \tilde{\bfv}_{\ell-1}(\bfy) = A_\bfy^\ell \check{\bfv}_\ell(\bfy).
    \end{align*}
    Solve $A_\bfy^\ell \check{\bfv}_\ell(\bfy) = \tilde{\bff}_\ell$ using $m_\ell$ iterations of the multigrid V-cycle and set
    \begin{align*}
        \tilde{\bfv}_\ell(\bfy) :=  \widetilde{\mg}_{k_0, k}^{m_\ell}(0, \overline{\bfkappa}_\bfy^\ell, \tilde{\bff}_\ell) + P_\ell \tilde{\bfv}_{\ell-1}(\bfy).
    \end{align*}
\end{itemize}
This procedure yields the following estimates using the contraction from Theorem~\ref{thm:hackbusch_thm}.
\begin{itemize}
    \item $\boldsymbol{\ell=1}.$ Using the contraction property from Theorem~\ref{thm:hackbusch_thm} and $m_1 \geq -\log(2) / \log(\mu(k))$ yields
        \begin{align*}
        \norm{\tilde{\bfv}_1(\bfy) - \bfv_1(\bfy)}_{A^1_\bfy} \leq \frac{1}{2} c \norm{f}_\ast.
        \end{align*}
    \item $\boldsymbol{\ell=2,\ldots,L-1}.$ Using again Theorem~\ref{thm:hackbusch_thm}, the definitions for $m_\ell, \tilde{\bfv}_\ell, \check{\bfv}_\ell$ and $\hat{\bfv}_\ell = \bfv_\ell - P_\ell \bfv_{\ell - 1}$, it holds
        \begin{align*}
        \norm{\tilde{\bfv}_\ell(\bfy) - \bfv_\ell(\bfy)}_{A^\ell_\bfy} &= \norm{\widetilde{\mg}_{k_0, k}^{m_\ell}(0, \overline{\bfkappa}_\bfy^\ell, \tilde{\bff}_\ell) + P_\ell \tilde{\bfv}_{\ell-1}(\bfy) - \check{\bfv}_\ell(\bfy) - P_\ell \tilde{\bfv}_{\ell-1}(\bfy)}_{A^\ell_\bfy}\\
        &=\norm{\widetilde{\mg}_{k_0, k}^{m_\ell}(0, \overline{\bfkappa}_\bfy^\ell, \tilde{\bff}_\ell) - \check{\bfv}_\ell(\bfy)}_{A^\ell_\bfy}\\
        &\leq \frac{1}{2} \norm{\check{\bfv}_\ell(\bfy)}_{A^\ell_\bfy}\\
        &\leq \frac{1}{2} \norm{\hat{\bfv}_{\ell}(\bfy)}_{A^\ell_\bfy} + \frac{1}{2} \norm{P_\ell \tilde{\bfv}_{\ell-1}(\bfy) - P_\ell \bfv_{\ell-1}(\bfy)}_{A^\ell_\bfy}\\
        &\leq \frac{1}{2} c 2^{-\ell} \norm{f}_\ast + \frac{1}{2} \norm{ \tilde{\bfv}_{\ell-1}(\bfy) - \bfv_{\ell-1}(\bfy)}_{A^{\ell-1}_\bfy}.
    \end{align*}
    Here, the bound on $\hat{\bfv}_\ell(\bfy)$ follows from Equation~\eqref{eq:v_hat_bound}.
    \item $\boldsymbol{\ell=L}.$ By the same argument as for the case $\ell=2,\ldots,L-1$ together with the norm inequality in~\eqref{eq:norm_inequality} and the definition of $m_L$, we arrive at the following estimate for the $H^1$-distance on the finest level.
    \begin{align*}
            &\hnorm{\tilde{\bfv}_L(\bfy) - \bfv_L(\bfy)}\\
            &\leq c \norm{\tilde{\bfv}_L(\bfy) - \bfv_L(\bfy)}_{A^L_\bfy}\\
            &= c \norm{\widetilde{\mg}_{k_0, k}^{m_L}(0, \overline{\bfkappa}_\bfy^L, \tilde{\bff}_L) - \check{\bfv}_L(\bfy)}_{A^L_\bfy}\\
            &\leq \frac{1}{2 c L} 2^{L} \veps \norm{\check{\bfv}_L(\bfy)}_{A^L_\bfy}\\
            &\leq \frac{1}{c L} 2^{L} \veps \left( \frac{1}{2}\norm{\hat{\bfv}_L(\bfy)}_{A^L_\bfy} + \frac{1}{2} \norm{P_L \tilde{\bfv}_{L-1}(\bfy) - P_L \bfv_{L-1}(\bfy)}_{A^{L}_\bfy} \right)\\
            &\leq \frac{1}{c L} 2^{L} \veps \left( \frac{1}{2} c 2^{-L} \norm{f}_\ast + \frac{1}{2} \norm{\tilde{\bfv}_{L-1}(\bfy) - \bfv_{L-1}(\bfy)}_{A^{L-1}_\bfy} \right)
        \end{align*}
        Recursively, this yields
        \begin{align*}
            \hnorm{\tilde{\bfv}_L(\bfy) - \bfv_L(\bfy)} &\leq \frac{1}{cL} 2^{L} \veps \sum_{\ell=1}^L \left(\prod_{n=\ell}^L \frac{1}{2} \right) 2^{-\ell} c \norm{f}_\ast = \frac{1}{2} \veps \norm{f}_\ast.
        \end{align*}
\end{itemize}

In total, the procedure is comprised of downsampling operations, multigrid V-cycles and intermediate upsampling operations, all of which can be approximated arbitrarily well by CNNs: The downsampling of the diffusivity field is realized following Lemma~\ref{lmm:kappa_conv}, the V-cycles are realized as shown in Theorem~\ref{thm:cnn_multigrid} and the prolongation operators for upsampling are realized as in Remark~\ref{remark:cnn_restriction_prolongation}.
Hence, the procedure poses a concatenation of finitely many functions each representable by CNNs up to any accuracy. Using Lemma~\ref{lmm:approximation_function_combination} (with $M=M$ and $\veps = \frac{1}{2} \veps \norm{f}_\ast$) implies the existence of a CNN $\Psi$ such that for all $\bfy \in \Gamma$ and all right hand sides $\bff$, we have
\begin{align*}
    \hnorm{\Psi(\bfkappa_\bfy, \bff) - \bfv_L(\bfy)} &\leq \hnorm{\Psi(\bfkappa_\bfy, \bff) - \tilde{\bfv}_L(\bfy)} + \hnorm{\tilde{\bfv}_L(\bfy) - \bfv_L(\bfy)}\\
    &\leq \veps \norm{f}_\ast.
\end{align*}

The structure of the presented multilevel algorithm together with the parameter bounds from Theorem~\ref{thm:cnn_multigrid}, Lemma~\ref{lmm:kappa_conv} and Remark~\ref{remark:cnn_restriction_prolongation} implies that $\Psi$ is an $\MLNet$ architecture and that there exists a constant $C >0$ with
\begin{align*}
    M(\Psi) &\leq C \sum_{\ell=1}^{L-1} \ell + C \left(\log\left(\frac{2L}{\veps} \right) - L \log(2) \right) L\\
    &\leq C L \left(L + \log\left(\frac{1}{\veps} \right) + \log(2L) - L \log(2)\right)\\
    &\leq C L \log\left(\frac{1}{\veps} \right) + C L^2.
\end{align*}
\end{proofof}

\end{document}